\newtheorem{lemma}{Lemma}
\newtheorem{theorem}{Theorem}
\numberwithin{figure}{section}
\journal{Neurocomputing}
\begin{document}

\begin{frontmatter}

%% Title, authors and addresses

%% use the tnoteref command within \title for footnotes;
%% use the tnotetext command for theassociated footnote;
%% use the fnref command within \author or \affiliation for footnotes;
%% use the fntext command for theassociated footnote;
%% use the corref command within \author for corresponding author footnotes;
%% use the cortext command for theassociated footnote;
%% use the ead command for the email address,
%% and the form \ead[url] for the home page:
%% \title{Title\tnoteref{label1}}
%% \tnotetext[label1]{}
%% \author{Name\corref{cor1}\fnref{label2}}
%% \ead{email address}
%% \ead[url]{home page}
%% \fntext[label2]{}
%% \cortext[cor1]{}
%% \affiliation{organization={},
%%             addressline={},
%%             city={},
%%             postcode={},
%%             state={},
%%             country={}}
%% \fntext[label3]{}

\title{}

%% use optional labels to link authors explicitly to addresses:
\author[label1]{Iv\'an Ojeda-Ruiz}
\author[label2]{Young Ju Lee}
\author[label3]{Malcolm Dickens}
\author[label4]{Leonardo Cambisaca}
\affiliation[label1]{organization={Lamar University},
            addressline={4400 S M L King Jr Pkwy},
            city={Beaumont},
            postcode={77705},
            state={TX},
            country={USA}}

\affiliation[label2]{organization={Texas State University},
            addressline={601 University Drive},
            city={San Marcos},
            postcode={78666},
            state={TX},
            country={USA}}

%% Author affiliation
\affiliation[label3]{organization={University of Maryland, Baltimore County},%Department and Organization
            addressline={1000 Hilltop Circle}, 
            city={Baltimore},
            postcode={21250}, 
            state={MD},
            country={USA}}
            
\affiliation[label4]{organization={Colgate University},%Department and Organization
            addressline={
            13 Oak Drive}, 
            city={Hamilton},
            postcode={13346}, 
            state={NY},
            country={USA}}
%% Abstract
\begin{abstract}
  Recent research has focused on mitigating algorithmic bias in clustering by incorporating fairness constraints into algorithmic design. Notions such as disparate impact, community cohesion, and cost per population have been implemented to enforce equitable outcomes. Among these, group fairness (balance) ensures that each protected group is proportionally represented within every cluster. However, incorporating balance as a metric of fairness into spectral clustering algorithms has led to computational times that can be improved. This study aims to enhance the efficiency of spectral clustering algorithms by reformulating the constrained optimization problem using a new formulation derived from the Lagrangian method and the Sherman-Morrison-Woodbury (SMW) identity, resulting in the Fair-SMW algorithm. Fair-SMW employs three alternatives to the Laplacian matrix with different spectral gaps to generate multiple variations of Fair-SMW, achieving clustering solutions with comparable balance to existing algorithms while offering improved runtime performance. We present the results of Fair-SMW, evaluated using the Stochastic Block Model (SBM) to measure both runtime efficiency and balance across real-world network datasets, including LastFM, FacebookNet, Deezer, and German. We achieve an improvement in computation time that is twice as fast as the state-of-the-art, and also flexible enough to achieve twice as much balance.
\end{abstract}

\nonumnote{Funding: This work is part of the SIAM-Simons Undergraduate Summer Research Program,
which is funded by the Society for Industrial and Applied Mathematics (SIAM) through award
1036702 of the Simons Foundation.}
\end{frontmatter}

%%%%%%%%%%%%%%%%%%%%%%%%%%%%%%%%%%%%%%%%%%%%%
% SECTION: Introduction
% Comments: This section needs an overall
% TODO:
%1 Discuss its use in determining serious outcomes (like trials) and the dangers of unfair clustering.
%2 Introduce Fair Spectral Clustering as a solution to this problem.
%3 Review existing algorithms, starting with Fair-SC, and summarize their downsides.
%4 Introduce S-Fair-SC and how it improved over Fair-SC.
%5 Introduce your algorithm, its applications, and how it speeds things up 10×.
%6 Provide an outline of the rest of the paper
%%%%%%%%%%%%%%%%%%%%%%%%%%%%%%%%%%%%%%%%%%%%%%
%Source
% Griggs v. Duke Power Co., 401 U.S. 424 (1971)

\section{Introduction} Artificial Intelligence (AI) has become an increasingly central component in modern decision-making tasks, with widespread applications in criminal justice, healthcare, and finance. Despite their power, these systems aren't perfect as they are susceptible to unintended bias that produces disparate impacts \cite{feldman2015}, leading to harmful discriminatory outcomes that often marginalize underrepresented groups, e.g., (Griggs v. Duke Power Co.). The need for fairness in these systems has become a critical area of research \cite{mitchell2021}, with a growing body of work focused on mitigating these risks. Research has been poured into the development of techniques to represent the sensitive attribute directly, incorporating a fairness metric \cite{chierichetti2017, barocas2017} in order to enforce statistical parity. This research focuses on spectral clustering algorithms, aiming to construct a Fair Scalable model.

Fairness itself is a nuanced concept with multiple definitions, summarized in \cite{Chhabra2021, narayanan2020}. Broadly, it refers to the principle that the decision-making process should not be influenced by favoritism, as it can lead to disparate impacts. In this work, we focus on the \textit{Group Fairness} objective proposed by \cite{feldman2015}, which aims to ensure that each protected group is proportionally represented within every cluster.

Among the many unsupervised machine learning algorithms, Spectral Clustering (SC) has been widely adopted to incorporate fairness constraints, all stemming from Fair-SC proposed by Kleindessner et al. \cite{kleindessner2019}. Fair-SC achieves fairness by formulating clustering as a constrained spectral relaxation problem \cite{von2007tutorial}, where balance constraints are incorporated into the graph Laplacian. Specifically, the method modifies the eigenvalue decomposition so that the resulting eigenvectors not only capture the graph structure but also satisfy proportional representation across protected groups \cite{kleindessner2019}. While Fair-SC demonstrated a strong average balance across groups, it often struggled to scale to large datasets due to its high computational complexity and long runtime. This was primarily due to its need to compute the null space, perform eigenvalue decompositions, and calculate square roots of large, dense matrices. The computational challenges of SC for large-scale problems are well-documented in the literature \cite{ng2002}.

Subsequent work introduces Scalable-Fair-SC (S-Fair-SC) \cite{wang2023}, which avoided Fair-SC's heavy computational calculations by incorporating two main variants. The first variant reorders the changes of variables used in Fair-SC to avoid computing the square root of a dense matrix, and the second variant introduces a null space projection as well as Hotelling’s deflation. This formulation enabled significant improvements in scalability, reducing the complexity to $O(N^2)$, and achieved a roughly 12x speedup over Fair-SC. The method uses an efficient iterative eigen-solver, such as the Implicitly Restarted Arnoldi Method (IRAM) \cite{sorensen1992implicitly}, to find the relevant eigenvectors. Another notable approach to scalability is FairAD \cite{fairad}, which addresses computational challenges by constructing a fairness-aware affinity matrix using algebraic distance and employing graph coarsening to efficiently solve the constrained minimization problem.

In this work, we build upon prior advances and propose a new spectral clustering formulation that accelerates S-Fair-SC while preserving strong fairness guarantees. Our approach reformulates the fairness metric by incorporating the Sherman-Morrison-Woodbury (SMW) identity together with a bilateral smoothing operator inspired by \cite{OjedaRuiz2020}. This formulation yields three algorithmic variants: two that explicitly address degree bias, and one that prioritizes computational efficiency by operating solely on the weighted adjacency matrix, thereby significantly reducing the eigen-solver runtime. All three variants improve upon S-Fair-SC’s primary bottleneck, its long eigen-solver runtime, by introducing a larger eigen gap, which in turn reduces the number of restarts required in the Arnoldi restarting iterative method \cite{saad1992numerical}.

The degree-bias variant achieves a remarkable speed-up for sparse matrices while maintaining strong average balance, whereas the bias-free variants, although slower, achieve higher average balance and slightly outperform S-Fair-SC in runtime. All three methods exhibit a time complexity of $O(N^2)$ and outperform S-Fair-SC.

\section{Preliminaries}
\subsection{Spectral Clustering}

The method of \textbf{Spectral Clustering} consists of minimizing the \textbf{Normalized Cut} (NCut) objective function, a concept introduced by Shi and Malik \cite{shi2000normalized}. Let the graph be $G = (\bm{V},\bm{E})$ with edge weights $w(i,j)$. For a partition of the vertices $\bm{V}$ into two disjoint sets, $\bm{V}=\bm{A}\cup \bm{B}$ where $\bm{A}\cap \bm{B} =\varnothing$, we define the cut and association functions as follows:

\begin{equation}
    \text{cut}(\bm{A},\bm{B})=\sum_{i\in \bm{A}, j\in \bm{B}} w(i,j).
\end{equation}
Minimizing the cut function alone is known to isolate single nodes when performing graph partitions, so an association function is introduced to normalize the cut:
\begin{equation}
    \text{assoc}(\bm{A},\bm{V}) =\sum_{i\in \bm{A}, j\in \bm{V}} w(i,j).
\end{equation}
Notably, the association function, also referred to as the volume of a set of vertices, measures the total weight of edges emanating from a cluster. Using these two functions, the Normalized Cut objective for a two-way partition is defined as:
\begin{equation}
    \text{NCut}(\bm{A},\bm{B}) = \dfrac{\text{cut}(\bm{A},\bm{B})}{\text{assoc}(\bm{A},\bm{V})} + \dfrac{\text{cut}(\bm{A},\bm{B})}{\text{assoc}(\bm{B},\bm{V})}
\end{equation}
The goal is to minimize this objective to perform the partition. We may define an indicator vector $\bm{x}$ such that
\begin{eqnarray}
    x_i =\begin{cases}
        1 & \text{ if } i\in \bm{A} \\
        -1 & \text{ if } i\notin \bm{A}
    \end{cases}
\end{eqnarray}
and then write the minimization problem. It is well known \cite{shi2000normalized, pothen1990spectral} that this problem is equivalent to minimizing a specific form of the Rayleigh quotient:
\begin{equation}
    \min_{x} \text{NCut}(\bm{A},\bm{B}) = \min_{\bm{x}} \dfrac{\bm{x}^T L \bm{x}}{\bm{x}^T D \bm{x}}
\end{equation}
where $D$ is a diagonal matrix with entries $d_i = \sum_{j} w(i,j)$, and $L=D-W$ is the graph Laplacian.

It is possible to extend this definition of NCut for $k$ partitions in the following way \cite{von2007tutorial, shi2000normalized}:
\begin{align} \label{eq:NCut}
    \text{NCut}(\bm{C}_1, \ldots, \bm{C}_k) &:= \sum_{l=1}^{k} \frac{\text{Cut}(\bm{C}_l, \bm{V} \setminus \bm{C}_l)}{\text{vol}(\bm{C}_l)} ,
\end{align}
where
\begin{align*}
    \text{Cut}(\bm{C}_l, \bm{V} \setminus \bm{C}_l) &= \sum_{v_i \in \bm{C}_l} \sum_{v_j \in \bm{V} \setminus \bm{C}_l} w_{ij} \\
    \text{vol}(\bm{C}_l) &= \sum_{v_i \in \bm{C}_l} d_i
\end{align*}
This minimization problem is well known to be NP-hard \cite{shi2000normalized, von2007tutorial}. To find a tractable solution, we relax the discrete indicator vector to a real-valued matrix \cite{ng2002}. This leads to the following spectral relaxation problem in matrix form:
\begin{equation} \label{fairconst}
\min_{H^TH =I} \text{Tr} (H^T L_{\mathrm{sym}} H)
\end{equation}
where $H = [\bm{z}_1, \bm{z}_2, \ldots, \bm{z}_k]$ and $L_{\mathrm{sym}} = D^{-1/2}LD^{-1/2}$ is the symmetric normalized Laplacian. It is a canonical result that the solution to this problem is given by the eigenvectors corresponding to the smallest $k$ eigenvalues of $L_{\mathrm{sym}}$ \cite{von2007tutorial, shi2000normalized, ng2002}. Because this relaxation produces a real-valued solution, a final discretization step is required to map the vectors to discrete cluster assignments, which is commonly performed using $k$-means.

\subsection{Fairness Constraint}
Our setting is that we suppose $\bm{V}$ contains $h$ protected groups such that $\bm{V} = \bm{V}_1 \cup \bm{V}_2 \cup \cdots \cup \bm{V}_h$ with $\bm{V}_s \cap \bm{V}_{s'} = \varnothing$ for all $s \neq s' \in [h]$. The notion of group fairness \cite{kleindessner2019, feldman2015} is that each cluster contains approximately the same number of elements from each protected group $\bm{V}_s$.

For a clustering
\begin{equation} \label{balance}
\bm{V} = \cup_{\ell \in [k]} \bm{C}_\ell,
\end{equation}
we define the balance of a cluster by
\begin{equation}
\mbox{balance}(\bm{C}_\ell) = \min_{s \neq s' \in [h]} \frac{|\bm{V}_s \cap \bm{C}_\ell|}{|\bm{V}_{s'} \cap \bm{C}_\ell|} \in [0,1].
\end{equation}
A perfectly balanced cluster has a balance of 1.

\begin{lemma}[Fairness constraints as linear constraint on $H$]
For $s \in [h]$, we introduce an indicator vector $f^{(s)} \in \mathbb{R}^{n}$ such that $f^{(s)}_i = 1$ if $i \in \bm{V}_s$ and $f^{(s)}_j = 0$ if $j \notin \bm{V}_s$. We then set the clustering of $\bm{V}$ as follows:
\begin{equation}
\bm{V} = \bm{C}_1 \cup \bm{C}_2 \cup \cdots \cup \bm{C}_k.
\end{equation}
Then for every $\ell \in [k]$, we have that
\begin{subequations}
\begin{align}
\frac{|\bm{V}_s \cap \bm{C}_\ell|}{|\bm{C}_\ell|} &= \frac{|\bm{V}_s|}{n} \quad \text{ for } s \in [h] \label{fairness_condition} \\
\Leftrightarrow \sum_{i=1}^n  \left ( f_i^{(s)} - \frac{|\bm{V}_s|}{n} \right ) H_{i\ell} &= 0 \quad \text{ for } s \in [h].
\end{align}
\end{subequations}
\end{lemma}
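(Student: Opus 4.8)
The plan is to unwind both sides into elementary vertex counts and then read off the equivalence as a one-line identity. First I would fix the standard convention that, for a hard clustering \eqref{balance}, the $\ell$-th column of $H$ is supported on $\bm{C}_\ell$, i.e.\ $H_{i\ell} = c_\ell\,\mathbf{1}[i \in \bm{C}_\ell]$ for some fixed scalar $c_\ell \neq 0$ (taking $c_\ell = 1/\sqrt{|\bm{C}_\ell|}$ recovers the normalization $H^TH = I$ of \eqref{fairconst}, while $c_\ell = 1$ gives raw $\{0,1\}$ indicators; the argument is insensitive to the choice). Under this convention $\sum_{i=1}^n f_i^{(s)} H_{i\ell} = c_\ell\,|\bm{V}_s \cap \bm{C}_\ell|$, since $f^{(s)}$ is the indicator of $\bm{V}_s$, and $\sum_{i=1}^n H_{i\ell} = c_\ell\,|\bm{C}_\ell|$.

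Next I would compute the left-hand side of the claimed constraint by splitting the sum:
\begin{align*}
\sum_{i=1}^n \left( f_i^{(s)} - \frac{|\bm{V}_s|}{n} \right) H_{i\ell}
&= \sum_{i=1}^n f_i^{(s)} H_{i\ell} - \frac{|\bm{V}_s|}{n} \sum_{i=1}^n H_{i\ell} \\
&= c_\ell \left( |\bm{V}_s \cap \bm{C}_\ell| - \frac{|\bm{V}_s|}{n}\,|\bm{C}_\ell| \right).
\end{align*}
Because $c_\ell \neq 0$ and $|\bm{C}_\ell| > 0$ (every cluster in \eqref{balance} is nonempty), this quantity vanishes if and only if $|\bm{V}_s \cap \bm{C}_\ell|/|\bm{C}_\ell| = |\bm{V}_s|/n$, which is exactly \eqref{fairness_condition}. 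The converse is the same chain of equalities read in reverse: multiplying \eqref{fairness_condition} through by $c_\ell|\bm{C}_\ell|$ recovers the vanishing of the weighted sum. Carrying this out for every $s \in [h]$ and every $\ell \in [k]$ yields the biconditional.

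I do not anticipate a genuine obstacle, since the content is pure bookkeeping and the proof is essentially the two displayed lines above. The only points worth a remark are (i) that the column-scaling $c_\ell$ is common to both sums and therefore cancels, so the equivalence does not depend on whether $H$ is taken with orthonormal columns or as a $\{0,1\}$ indicator matrix; and (ii) the harmless nondegeneracy hypothesis $|\bm{C}_\ell| > 0$, needed to pass between the additive identity $|\bm{V}_s \cap \bm{C}_\ell| = \tfrac{|\bm{V}_s|}{n}\,|\bm{C}_\ell|$ and the ratio form \eqref{fairness_condition}. It is also worth noting in passing that, for fixed $\ell$, the $h$ linear constraints are not independent --- summing them over $s$ gives $0=0$, because $\sum_s f_i^{(s)} = 1$ for every $i$ and $\sum_s |\bm{V}_s| = n$ --- which is why one of them may later be discarded without loss.
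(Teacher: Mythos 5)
Your proof is correct and follows essentially the same route as the paper's: split the weighted sum into $\sum_i f_i^{(s)}H_{i\ell} - \tfrac{|\bm{V}_s|}{n}\sum_i H_{i\ell}$, identify the two sums as $|\bm{V}_s\cap\bm{C}_\ell|$ and $|\bm{C}_\ell|$ (the paper implicitly takes your $c_\ell=1$ indicator convention), and divide by $|\bm{C}_\ell|$. Your extra care about the column scaling $c_\ell$ and the nonemptiness of $\bm{C}_\ell$ is a harmless refinement, and your closing remark on the dependence of the $h$ constraints is exactly the content of the paper's next lemma.
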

\begin{proof}
The equivalence is shown by expanding the summation on the left-hand side.
\begin{eqnarray*}
0 = \sum_{i=1}^n \left ( f_i^{(s)} - \frac{|\bm{V}_s|}{n} \right ) H_{i\ell} &=& \sum_{i=1}^n f_i^{(s)} H_{i\ell} - \frac{|\bm{V}_s|}{n} \sum_{i=1}^n H_{i\ell} \\
&=& |\bm{V}_s \cap \bm{C}_\ell| - \frac{|\bm{V}_s|}{n} |\bm{C}_\ell|.
\end{eqnarray*}
Since $\sum_{i=1}^n H_{i\ell} = |\bm{C}_\ell|$, we see that the expression is equivalent to
\begin{equation}
\frac{|\bm{V}_s \cap \bm{C}_\ell|}{|\bm{C}_\ell|} = \frac{|\bm{V}_s|}{n},
\end{equation}
which is obtained by multiplying $1/|\bm{C}_\ell|$ on both sides.
\end{proof}

\begin{lemma}[Linear Dependency of $\hat{f}_i^{(s)}$]
For $s \in [h]$, and $f^{(s)} \in \mathbb{R}^{n}$ as defined above, the following statement holds:
\begin{equation}
\begin{split}
\text{If } \quad  \sum_{i=1}^n  \left ( f_i^{(s)} - \frac{|\bm{V}_s|}{n} \right ) H_{i\ell}  & = 0. \text{ for }\quad  s=1,\ldots, h-1 \\
\text{then } \quad \sum_{i=1}^n  \left ( f_i^{(s)} - \frac{|\bm{V}_s|}{n} \right ) H_{i\ell}  &= 0. \text{ for }\quad s=h
\end{split}
\end{equation}
\end{lemma}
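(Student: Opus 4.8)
The plan is to recognize that this lemma is a purely linear-algebraic consequence of the fact that the protected groups partition $\bm{V}$, and that nothing about the structure of $H$ (beyond being a fixed matrix) is actually needed. First I would introduce the centered indicator vectors $\hat{f}^{(s)} := f^{(s)} - \tfrac{|\bm{V}_s|}{n}\mathbf{1} \in \mathbb{R}^{n}$, where $\mathbf{1}$ is the all-ones vector, so that for each fixed $\ell \in [k]$ the fairness constraint for group $s$ is exactly the statement that $\sum_{i=1}^n \hat{f}^{(s)}_i H_{i\ell} = 0$, i.e.\ that $\hat{f}^{(s)}$ is orthogonal to the $\ell$-th column $H_{\cdot\ell}$.

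The key identity is $\sum_{s=1}^h f^{(s)} = \mathbf{1}$: since every vertex $i \in \bm{V}$ belongs to exactly one protected group, exactly one of $f_i^{(1)},\dots,f_i^{(h)}$ equals $1$ and the rest vanish. Combined with $\sum_{s=1}^h |\bm{V}_s| = n$, this yields $\sum_{s=1}^h \hat{f}^{(s)} = \mathbf{1} - \tfrac{1}{n}\big(\sum_{s=1}^h |\bm{V}_s|\big)\mathbf{1} = \mathbf{0}$, so that $\hat{f}^{(h)} = -\sum_{s=1}^{h-1} \hat{f}^{(s)}$. In words, the $h$-th centered indicator is a linear combination of the first $h-1$, which is why the lemma is phrased as a linear-dependency statement.

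The last step is to apply the linear functional $v \mapsto \sum_{i=1}^n v_i H_{i\ell}$ (for the fixed column $\ell$) to this identity: by hypothesis each of the $h-1$ terms $\sum_i \hat{f}_i^{(s)} H_{i\ell}$ is zero, hence $\sum_i \hat{f}_i^{(h)} H_{i\ell} = -\sum_{s=1}^{h-1} \sum_i \hat{f}_i^{(s)} H_{i\ell} = 0$, which is precisely the desired conclusion; since $\ell \in [k]$ was arbitrary, it holds for all columns. As a cross-check (and an alternative route that reuses the previous lemma directly), one may instead sum the identity $\sum_i \hat{f}_i^{(s)} H_{i\ell} = |\bm{V}_s \cap \bm{C}_\ell| - \tfrac{|\bm{V}_s|}{n}|\bm{C}_\ell|$ over all $s \in [h]$ and use $\sum_{s=1}^h |\bm{V}_s \cap \bm{C}_\ell| = |\bm{C}_\ell|$ together with $\sum_{s=1}^h |\bm{V}_s| = n$ to see the total is $0$. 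There is no real obstacle here; the only points that require any care are stating the partition identity $\sum_s f^{(s)} = \mathbf{1}$ correctly and observing that the argument runs columnwise for each $\ell$.
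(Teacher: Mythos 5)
Your proposal is correct and follows essentially the same route as the paper: defining the centered indicators $\hat{f}^{(s)}$, using the partition identity $\sum_{s=1}^h f^{(s)}_i = 1$ together with $\sum_{s=1}^h |\bm{V}_s| = n$ to get $\hat{f}^{(h)} = -\sum_{s=1}^{h-1}\hat{f}^{(s)}$, and then applying the linear functional $v \mapsto \sum_i v_i H_{i\ell}$ columnwise. The cross-check via the previous lemma is a nice addition but not needed.
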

\begin{proof}
Let $\hat{f}_i^{(s)} := f_i^{(s)} - \frac{|\bm{V}_s|}{n}$ for $s = 1,\ldots,h$. We first notice that $\sum_{s=1}^h f_i^{(s)} = 1$ and $\sum_{s=1}^h \frac{|\bm{V}_s|}{n} = \frac{1}{n} \sum_{s=1}^h |\bm{V}_s| = \frac{n}{n} = 1$. Then, summing over all groups, we see that the vectors $\hat{f}^{(s)}$ are linearly dependent.
$$\sum_{s=1}^h \hat{f}_i^{(s)} = \sum_{s=1}^h f_i^{(s)} - \sum_{s=1}^h \frac{|\bm{V}_s|}{n} = 1 - 1 = 0.$$
This implies that $\hat{f}_i^{(h)} = - \sum_{s=1}^{h-1} \hat{f}_i^{(s)}$.
Given that $\sum_{i=1}^n \hat{f}_i^{(s)} H_{i\ell} = 0$ for $s=1,\ldots,h-1$, we can write:
\begin{align*}
\sum_{i=1}^n \hat{f}_i^{(h)} H_{i\ell} &= \sum_{i=1}^n \left(-\sum_{s=1}^{h-1} \hat{f}_i^{(s)}\right)H_{i\ell} \\
&= - \sum_{s=1}^{h-1} \left( \sum_{i=1}^n \hat{f}_i^{(s)} H_{i\ell} \right) \\
&= - \sum_{s=1}^{h-1} (0) = 0.
\end{align*}
This completes the proof.
\end{proof}

\subsection{New Formulation for a Fairness Constraint}
A perfectly balanced partition requires that the ratio of protected groups within each cluster is equal to the ratio of those groups in the total graph \cite{chierichetti2017}. This condition can be written as:
\begin{equation} \label{matchgraphbalance}
\dfrac{|\bm{V}_s\cap \bm{C}_l|}{|\bm{V}_{s'}\cap \bm{C}_l|} = \dfrac{|\bm{V}_s|}{|\bm{V}_{s'}|}
\end{equation}
for all $s\neq s'\in [h]$ and all $l\in[k]$.
By rewriting this, we obtain a difference formulation:
\begin{equation} \label{baldiff}
\dfrac{|\bm{V}_s\cap \bm{C}_l|}{|\bm{V}_s|} - \dfrac{|\bm{V}_{s'}\cap \bm{C}_l|}{|\bm{V}_{s'}|}=0
\end{equation}
Now, let $f^{(s)}$ be an indicator vector for group $s$, and as shown in the lemma above, we can rewrite \eqref{baldiff} as a linear constraint on the columns of the embedding matrix $H$ \cite{kleindessner2019}:
\begin{equation}
\sum_{i=1}^n \left(\dfrac{f_i^{(s)}}{|\bm{V}_s|}-\dfrac{f_i^{(s')}}{|\bm{V}_{s'}|}\right) H_{il} = 0.
\end{equation}
This set of constraints can be represented by a matrix $F$. To avoid redundant constraints, we use the results from the previous lemma and construct $F$ with $h-1$ linearly independent columns. Let $g_s = \dfrac{f^{(s)}}{|\bm{V}_s|}$ and let's choose $s=1$ as the reference.
$$F = \left[ (g_1 - g_2), (g_1 - g_3), \ldots, (g_1 - g_h) \right]$$
where $F \in \mathbb{R}^{n \times (h-1)}$. With this definition, the constraint system becomes $F^T H = \mathbf{0}$. With the orthonormality constraint and the fairness constraint, we obtain the following fair optimization problem:
\begin{equation}\label{mainobj}
\min_{H^TH =I} \text{Tr} (H^T L_{\text{sym}} H) \quad \text{subject to}\quad  F^TH=\mathbf{0}.
\end{equation}

%%%%%%%%%%%%%%%%%%%%%%%%%%%%%%%%%%%%%%%%%%%%%
% SECTION: Previously Proposed Algorithms
% Comments:
% Comments:
% - (Malcolm) I plan to put a rough draft of the background here.
% TODO:
%   1.
%%%%%%%%%%%%%%%%%%%%%%%%%%%%%%%%%%%%%%%%%%%%%
\section{Previously Proposed Algorithms}

In this section, we offer an overview of several constrained spectral clustering (SC) algorithms that incorporate the notion of fairness \eqref{balance} as proposed by Chierichetti et al. \cite{chierichetti2017}. We begin by revisiting the Fair-SC algorithm introduced by Kleindessner et al. \cite{kleindessner2019}, which solves a constrained eigenvalue problem. Following this, we discuss S-Fair-SC, presented by Wang et al. \cite{wang2023}, a reformulated version of Fair-SC that achieves a significant speedup compared to its predecessor by eliminating time-consuming computations.

\subsection{Unnormalized Fair-SC}

To incorporate fairness into spectral clustering algorithms, we introduce the fairness constraint $F^TH=0$ into the minimization problem \eqref{fairconst}. Consequently, the objective function is reformulated as \eqref{mainobj}. By introducing fairness constraints as linear conditions on $H$, we proceed with a change of variables. Let $Z \in \mathbb{R}^{n \times (n-h+1)}$ denote an orthonormal basis for the null space of $F^T$. Expressing $H=ZY$, where $Y \in \mathbb{R}^{(n-h+1) \times k}$ is an arbitrary matrix, guarantees that $F^T H=\mathbf{0}$ holds for any choice of $Y$. Under this reformulation, the minimization problem takes the form:
\begin{equation}\label{unnormalized_fairsc_problem}
\min_{Y^TY=I} \operatorname{Tr}(Y^T Z^T L ZY)
\end{equation}
To obtain a fair solution to \eqref{unnormalized_fairsc_problem}, we select the columns of $Y$ as the $k$ eigenvectors corresponding to the smallest eigenvalues of $Z^TLZ$, counted without multiplicity. Then we can construct $H=ZY$ and apply a post-processing algorithm.

\begin{algorithm}[ht]
\caption{Unnormalized Fair-SC}
\label{alg:unnormalized-fairsc}
\begin{algorithmic}[1]
\Require Weighted adjacency matrix $W \in \mathbb{R}^{n \times n}$; degree matrix $D \in \mathbb{R}^{n \times n}$; group membership matrix $F \in \mathbb{R}^{n \times (h-1)}$; number of clusters $k \in \mathbb{N}$

\State Compute the Laplacian matrix $L = D - W$
\State Compute an orthonormal basis $Z$ of the null space of $F^T$
\State Compute the $k$ smallest eigenvalues of $Z^T L Z$ and let the corresponding eigenvectors be columns of $Y$
\State Set $H = Z Y$
\State Apply $k$-means clustering to the rows of $H$
\end{algorithmic}
\end{algorithm}

\subsection{Normalized Fair-SC}

The primary drawback of the Unnormalized Fair-SC objective is that the algorithm is susceptible to bias towards nodes with large degrees \cite{von2007tutorial}, which can result in imbalanced clustering. To address this, Kleindessner et al. \cite{kleindessner2019} modified the objective function by incorporating the normalization directly into the problem.
\begin{equation}\label{reformulated_normalized_fairsc_problem_1}
\min_{Y^{T}Z^{T}D\;Z\;Y=I} \operatorname{Tr}(Y^T Z^T L\; Z\; Y) .
\end{equation}
To solve this generalized eigenvalue problem, we reformulate the objective function by incorporating the normalization directly into the problem using a Cholesky decomposition of the symmetric matrix $Z^TDZ$. Let $Q=(Z^TDZ)^{\frac12}$ and define a new variable $X=QY$. The problem can be reformulated as:
\begin{equation}\label{reformulated_normalized_fairsc_problem_2}
\min_{X^{T}X=I} \
 \operatorname{Tr}(X^{T}(Q^{-1}Z^{T}L\;Z\;Q^{-1})X) .
\end{equation}

\begin{algorithm}[ht]
\caption{Normalized Fair-SC}
\label{alg:normalized-fairsc}
\begin{algorithmic}[1]
\Require Weighted adjacency matrix $W \in \mathbb{R}^{n \times n}$; degree matrix $D \in \mathbb{R}^{n \times n}$; group membership matrix $F \in \mathbb{R}^{n \times (h-1)}$; number of clusters $k \in \mathbb{N}$
\Ensure A clustering of $n$ node into $k$ clusters

\State Compute the Laplacian matrix $L = D - W$
\State Compute an orthonormal basis $Z$ of the null space of $F^T$
\State Compute the square root matrix $Q=(Z^T D Z)^{\frac{1}{2}}$
\State Compute the $k$ smallest eigenvalues of $Q^{-1} Z^T L Z Q^{-1}$ and let the corresponding eigenvectors be columns of $X$
\State Set $H = Z Q^{-1} X$
\State Apply $k$-means clustering to the rows of $H$
\end{algorithmic}
\end{algorithm}

\subsection{S-Fair-SC}

The Scalable Fair Spectral Clustering method (S-Fair-SC) \cite{wang2023} reformulates the normalized Fair-SC minimization problem, achieving a significant speed-up in practice by avoiding the computation of large dense matrices. To begin, let $L_{\mathrm{sym}} = D^{-\tfrac12}(D - W)\,D^{-\tfrac12}$ and $C = D^{-\tfrac12}F$ \cite{wang2023}. Let $U\in\mathbb{R}^{n\times(h-1)}$ be an orthonormal basis for \(\mathrm{Range}(C)\), so that \([V\;U]\) forms an orthogonal \(n\times n\) matrix. The orthogonal projector onto \(\ker(C^T)\) is then
\[
P \;=\; V\,V^T \;=\; I - U\,U^T,
\]
This keeps all operations in the $n$-dimensional space and avoids large dense matrices \cite{wang2023}. Applying the projection $P$ ensures that all feasible solutions satisfy the fairness condition. This projection eliminates $h-1$ degrees of freedom and introduces $h-1$ additional zero eigenvalues that must be excluded. To isolate the desired \(k\) eigenpairs, we apply Hotelling’s deflation with a suitably large shift \(\sigma\), yielding \cite{hotelling1933analysis}
\[
L_{\mathrm{sym}}^{\sigma}
\;=\;
P\,L_{\mathrm{sym}}\,P
\;+\;
\sigma\,U\,U^{T}
\;=\;
P\,L_{\mathrm{sym}}\,P
\;+\;
\sigma\,(I - P)\,.
\]
Here, the shift $\sigma$ is chosen to be strictly larger than the $k$-th smallest nonzero eigenvalue of the projected operator \cite{wang2023}. To compute the $k$ smallest eigenvalues of this large matrix efficiently, the Implicitly Restarted Arnoldi Method \cite{sorensen1992implicitly, saad1992numerical, lehoucq1998implicit} is used. This iterative method leverages the eigenvalue gaps in the spectrum to accelerate convergence.
As a result, the $k$ eigenvectors associated with the $k$ smallest eigenvalues of $L_{\mathrm{sym}}^{\sigma}$ lie in $\ker(C^T)$ and simultaneously minimize the normalized-cut objective under the fairness constraint. Hence the minimization problem can be reformulated as:
\begin{equation}\label{s-FairSC Objective Function}
\min_{\substack{X^T X = I}}\;\operatorname{Tr}\bigl(X^T L_{\text{sym}}^\sigma X\bigr) \quad s.t. \quad H = D^{-\tfrac12}X. \quad
\end{equation}

\begin{algorithm}[ht]
\caption{Scalable Fair-SC (S-Fair-SC)}
\label{alg:scalable-fairsc}
\begin{algorithmic}[1]
\Require Weighted adjacency matrix $W \in \mathbb{R}^{n \times n}$; degree matrix $D \in \mathbb{R}^{n \times n}$; group-membership matrix $F \in \mathbb{R}^{n \times (h-1)}$; shift $\sigma \in \mathbb{R}$; number of clusters $k \in \mathbb{N}$
\Ensure A clustering of indices $1{:}n$ into $k$ clusters

\State Compute the Laplacian matrix $L = D - W$
\State Set $L_{\mathrm{sym}} = D^{-1/2} L D^{-1/2}$ and $C = D^{-1/2} F$
\State Compute the $k$ smallest eigenvalues of $L_{\mathrm{sym}}^\sigma$ and corresponding eigenvectors as columns of $X \in \mathbb{R}^{n \times k}$
\State Apply $k$-means clustering to the rows of $H = D^{-1/2} X$
\end{algorithmic}
\end{algorithm}

%%%%%%%%%%%%%%%%%%%%%%%%%%%%%%%%%%%%%%%%%%%%
% SECTION: Algorithms (Fair-SMW/W+max(D)*I
% Comments:
% TODO:
%   1. Ensure that the style is uniform throughout all sections.
%%%%%%%%%%%%%%%%%%%%%%%%%%%%%%%%%%%%%%%%%%%%%

\section{Fair-SMW}
\label{sec:main}

We start by stating the following problem: given a matrix $G$ that is invertible with positive eigenvalues, we seek to find the solution to the following problem \cite{von2007tutorial,kleindessner2019}:
\begin{equation}\label{SMW1}
   \max_{H^TH =I} \text{Tr} (H^T G H) \quad \text{subject to}\quad  F^TH=\mathbf{0}
\end{equation}
Following the strategy from \cite{kleindessner2019}, we let $H =ZY$, where $Z$ is a projection to the null space of $F^T$ and $Z^TZ=I$. This is a standard technique to transform a constrained problem into an unconstrained one \cite{boyd2004convex}. Then we get
\begin{equation}
\max_{Y^TY =I} \text{Tr} (Y^TZ^T G ZY).
\end{equation}
Since the eigenvalues of $Z^TGZ$ are all positive \cite{horn2013matrix}, and using the fact that $Z^T=Z^{-1}$ we can reformulate this problem as follows:
\begin{equation}
    \min_{Y^TY =I} \text{Tr} (Y^T Z^{-1}G^{-1}Z^{-T} Y ) = \min_{Y^TY =I} \text{Tr} (Y^T Z^TG^{-1}Z Y )
\end{equation}
We can now reverse the process of the projection matrix $Z$ to rewrite our problem as follows:
\begin{equation}\label{SMW2}
    \min_{H^TH =I} \frac{1}{2}\text{Tr} (H^T G^{-1} H) \quad \text{subject to}\quad  F^TH=\mathbf{0}.
\end{equation}
Notice that we are multiplying by $\frac{1}{2}$ to write our following derivation better. Notice that \eqref{SMW2} is equivalent to \eqref{SMW1} in the sense that the maximizer $H$ is the same. Here, $H \in \mathbb{R}^{n \times k}$ is the matrix of optimization variables, $G^{-1} \in \mathbb{R}^{n \times n}$ is a symmetric matrix (implying $G$ is also symmetric and invertible), and $F^T \in \mathbb{R}^{h-1 \times n}$. The constraint $F^T H = \mathbf{0}$ implies that $\mathbf{0}$ is a $h-1 \times k$ zero matrix.

The Augmented Lagrangian function $L_A(H, \Lambda, \mu)$ is given by \cite{bertsekas1999nonlinear, nocedal2006numerical}:
\begin{equation}
\mathcal{L}(H, \Lambda, \mu) = \frac{1}{2}\text{Tr}(H^T G^{-1} H) - \text{Tr}(\Lambda^T (F^T H))  \frac{\mu}{2} \|F^T H\|_F^2
\end{equation}
Where $\Lambda \in \mathbb{R}^{p \times m}$ is the matrix of Lagrange multipliers (corresponding to the dimensions of the constraint $F^T H$), and $\mu > 0$ is the penalty parameter. $\| \cdot \|_F$ denotes the Frobenius norm.

Expanding the squared Frobenius norm term:
$$\mathcal{L}(H, \Lambda, \mu) = \frac{1}{2}\text{Tr}(H^T G^{-1} H) - \text{Tr}(\Lambda^T F^T H) - \frac{\mu}{2} \text{Tr}(H^T F F^T H)$$
To find the optimality conditions for a given set of Lagrange multipliers $\Lambda$ and penalty parameter $\mu$, we take the partial derivatives of $L_A$ with respect to $H$ and $\Lambda$ and set them to zero. We use standard matrix calculus rules for derivatives involving the trace.
\begin{align*}
    \nabla_{H} \mathcal{L} &= \nabla_{H}\left(\frac{1}{2}\text{Tr}(H^T G^{-1} H)\right) - \nabla_{H}\left(\text{Tr}(\Lambda^T F^T H)\right) - \nabla_{H}\left(\frac{\mu}{2} \text{Tr}(H^T F F^T H)\right) \\
    &= G^{-1} H - F\Lambda - \mu F F^T H
\end{align*}
Setting this gradient to a zero matrix, we obtain:
$$G^{-1} H - F\Lambda - \mu F F^T H = \mathbf{0} $$
The partial derivative concerning $\Lambda$ can be verified to yield the original constraint $F^T H=0$.

Combining these optimality conditions, the system for the Augmented Lagrangian (for given $\Lambda$ and $\mu$) is a system of matrix equations:
$$ \begin{cases}
(G^{-1} - \mu F F^T) H - F\Lambda = \mathbf{0} \\
F^T H = \mathbf{0}
\end{cases} $$
This system can be expressed in a block matrix form, where $H$ and $\Lambda$ are blocks representing matrices:
$$ \begin{pmatrix}
G^{-1} - \mu F F^T & -F \\
-F^T & 0
\end{pmatrix}
\begin{pmatrix}
H \\
\Lambda
\end{pmatrix}
=
\begin{pmatrix}
\mathbf{0} \\
\mathbf{0}
\end{pmatrix} $$

In the context of the Augmented Lagrangian method \cite{bertsekas1999nonlinear, nocedal2006numerical}, the subproblem involves minimizing a quadratic form with additional linear terms (from the $\Lambda$ part). If we consider the very first iteration, or if we temporarily ignore the explicit $\Lambda$ term (e.g., if $\Lambda = \mathbf{0}$), the subproblem simplifies to:
$$ \min_{H} \left( \frac{1}{2}\text{Tr}(H^T G^{-1} H) - \frac{\mu}{2} \text{Tr}(H^T F F^T H) \right) = \min_{H} \frac{1}{2}\text{Tr}\left(H^T \left( G^{-1} - \mu F F^T \right) H \right) $$

\begin{equation}\label{SMW3}
    \min_{H^TH =I} \text{Tr} [H^T (G^{-1} -\mu FF^T) H]
\end{equation}

We now need the following theorem.

\begin{theorem}
The matrix $G^{-1} - \mu F F^T$ is invertible if $\bm{x}^T G^{-1} \bm{x} > 0$ for all non-zero vectors $\bm{x}$.
\end{theorem}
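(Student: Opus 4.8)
The plan is to show that the hypothesis forces $G^{-1}+\mu FF^{T}$ to be symmetric positive definite, from which invertibility is immediate. First I would note that the condition $\bm{x}^{T}G^{-1}\bm{x}>0$ for every nonzero $\bm{x}$ is exactly the statement that $G^{-1}$ is symmetric positive definite. Then, for an arbitrary real vector $\bm{x}$, I would compute the quadratic form
\begin{equation*}
\bm{x}^{T}\bigl(G^{-1}+\mu FF^{T}\bigr)\bm{x}
=\bm{x}^{T}G^{-1}\bm{x}+\mu\,\bm{x}^{T}FF^{T}\bm{x}
=\bm{x}^{T}G^{-1}\bm{x}+\mu\,\|F^{T}\bm{x}\|_{2}^{2}.
\end{equation*}
Because $\mu>0$ (the penalty parameter introduced with the augmented Lagrangian) and $\|F^{T}\bm{x}\|_{2}^{2}\ge 0$, the second summand is nonnegative, while by hypothesis the first summand is strictly positive whenever $\bm{x}\neq 0$. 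Hence the whole quadratic form is strictly positive for every nonzero $\bm{x}$, so $G^{-1}+\mu FF^{T}$ is positive definite, and a positive definite matrix has trivial kernel (if $(G^{-1}+\mu FF^{T})\bm{x}=0$ then $\bm{x}^{T}(G^{-1}+\mu FF^{T})\bm{x}=0$, forcing $\bm{x}=0$), hence is invertible.

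Since the section's title and purpose revolve around the Sherman--Morrison--Woodbury (SMW) identity, I would also record the explicit inverse, which is presumably the form the algorithm actually uses. Writing $G^{-1}+\mu FF^{T}=G^{-1}+F(\mu I_{h-1})F^{T}$ and applying SMW with $A=G^{-1}$, $U=F$, $C=\mu I_{h-1}$, $V=F^{T}$ gives
\begin{equation*}
\bigl(G^{-1}+\mu FF^{T}\bigr)^{-1}
=G-GF\left(\tfrac{1}{\mu}I_{h-1}+F^{T}GF\right)^{-1}F^{T}G ,
\end{equation*}
which simultaneously re-proves invertibility and reduces the problem to inverting an $(h-1)\times(h-1)$ matrix. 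The side condition for SMW to apply is nonsingularity of the capacitance matrix $\tfrac{1}{\mu}I_{h-1}+F^{T}GF$; this follows from the same argument, since $G$ positive definite (the inverse of a positive definite matrix is positive definite) makes $F^{T}GF$ positive semidefinite, and adding $\tfrac{1}{\mu}I_{h-1}$ makes it positive definite, hence invertible.

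I do not expect a genuine obstacle here: the statement is essentially ``positive definite plus positive semidefinite (with positive weight) is positive definite.'' The only points requiring care are (i) making explicit that $\mu>0$, which is stated where the penalty parameter is introduced; (ii) using that $F$ is real so that $\bm{x}^{T}FF^{T}\bm{x}=\|F^{T}\bm{x}\|_{2}^{2}\ge 0$; and (iii) observing symmetry of $G^{-1}+\mu FF^{T}$ if one wishes to phrase the conclusion via strictly positive real eigenvalues rather than via the trivial-kernel argument. If the authors prefer the SMW route, the sole verification needed is the nonsingularity of the small capacitance matrix, handled exactly as above.
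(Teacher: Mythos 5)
Your argument is essentially identical to the paper's: both show the quadratic form $\bm{x}^{T}(G^{-1}+\mu FF^{T})\bm{x}=\bm{x}^{T}G^{-1}\bm{x}+\mu\|F^{T}\bm{x}\|^{2}$ is strictly positive for nonzero $\bm{x}$, hence the kernel is trivial and the matrix is invertible; the SMW expansion you append is likewise what the paper does immediately after the theorem. One small caution: the hypothesis only asserts positivity of the quadratic form, not symmetry of $G^{-1}$ (the paper deliberately allows the non-symmetric $G_{\text{rw}}$), so you should avoid restating it as ``$G^{-1}$ is symmetric positive definite''---the trivial-kernel argument works without symmetry, but the eigenvalue phrasing you mention would not.
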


\begin{proof}
A matrix is invertible if and only if its null space contains only the zero vector. To prove that the matrix $G^{-1} - \mu F F^T$ is invertible, we must show that the only solution to $(G^{-1} + \mu F F^T)\bm{x} = \bm{0}$ is the trivial solution $\bm{x} = \bm{0}$.

Let's consider the quadratic form of $G^{-1} + \mu F F^T$ for any vector $\bm{x}$:
\begin{equation}
\bm{x}^T (G^{-1} - \mu F F^T) \bm{x} = \bm{x}^T G^{-1} \bm{x} + \mu \|F^T \bm{x}\|^2 >0
\end{equation}
The term $\bm{x}^T G^{-1} \bm{x}$ is strictly positive for any non-zero vector $\bm{x}$ by our assumption. The second term, $\mu \|F^T \bm{x}\|^2$, is always non-negative because $\mu > 0$ and a squared norm is always non-negative.
Therefore, for any non-zero vector $\bm{x}$, the quadratic form is strictly positive.
Now, let's assume that there exists a vector $\bm{x}_0$ in the null space of $G^{-1} - \mu F F^T$, such that $(G^{-1} - \mu F F^T)\bm{x}_0 = \bm{0}$. If we multiply this equation by $\bm{x}_0^T$, we get:
\begin{equation}
\bm{x}_0^T ((G^{-1} - \mu F F^T)\bm{x}_0) = \bm{x}_0^T \bm{0} \quad \implies \quad \bm{x}_0^T (G^{-1} - \mu F F^T) \bm{x}_0 = 0
\end{equation}
The only vector for which the quadratic form $\bm{x}^T (G^{-1} - \mu F F^T) \bm{x}$ can be zero is the zero vector, because we have shown it is otherwise strictly positive. Therefore, $\bm{x}_0$ must be the zero vector. This means that the null space of $G^{-1} - \mu F F^T$ contains only the zero vector. By definition, a matrix with a trivial null space is invertible.

\end{proof}

Then problem \eqref{SMW3} can be rewritten as the following %\textbf{\textcolor{red}{final goal}}
\begin{equation}\label{SMW3b}
    \max_{H^TH =I} \text{Tr} [H^T (G^{-1} -\mu FF^T)^{-1} H]
\end{equation}

By applying the Sherman-Morrison-Woodbury formula, we obtain
\begin{eqnarray*}
(G^{-1} - \mu F F^T)^{-1} &=& G - \mu G F (I + \mu F^T G F)^{-1} F^T G \\ 
& = & G - GF  (\mu^{-1}I + F^T GF )^{-1} F^T G.
\end{eqnarray*}
For $\mu \gg 1$, we note that the asymptotic behavior is the following:
\begin{eqnarray}
    (G^{-1}-\mu FF^T)^{-1} &\rightarrow G - GF (F^T GF)^{-1} F^T G\\
    & = G(I-F(F^T GF)^{-1}F^T G)
\end{eqnarray}
%%%%%%%%%%%%%%%%%%%%%

We now arrive at the following optimization problem
\begin{equation} \label{reformulation}
    \max_{H^T H =I} \text{Tr}[H^T G(I-F(F^T GF)^{-1}F^T G)H]
\end{equation}
For simplicity, we shall define 
$$P = F(F^T GF)^{-1}F^T G \quad \text{and} \quad U =G-GP.$$

\subsection{Choice of G}

Next, we need to choose $G$ to satisfy two properties; we need $G$ to be invertible, and we need $U$ to have real eigenvalues.

We now notice that $G$ can take on multiple forms to achieve clustering. For instance, we can choose the following 
\begin{eqnarray*}
    G & = & I - L_{\text{sym}}\\
    & = & I - D^{-1/2}LD^{-1/2} \\
    & = & I - D^{-1/2}(D-W)D^{-1/2} \\
    & = & I - I +D^{-1/2}WD^{-1/2} =D^{-1/2}WD^{-1/2}.
 \end{eqnarray*}

\begin{figure}
\begin{center}

\begin{tikzpicture}[shorten >=1pt,node distance=2.7cm,on grid,auto]
  \node[state]  (v_1)                      {$v_1$};
  \node[state]  (v_2) [ right=of v_1]      {$v_2$};
  \node[state]  (v_3) [ below=of v_1]      {$v_3$};
  \node[state]  (v_4) [ below=of v_2]       {$v_4$};
  \path[-]  (v_1) edge              node        {} (v_2)
            (v_1) edge[out=-140,in=-130,looseness=2] node [left] {} (v_4)            
            (v_2) edge              node        {} (v_3)
            (v_3) edge              node        {} (v_4);
\end{tikzpicture}
\end{center}
\vspace{-1cm}
\[
W = \begin{bmatrix}
0 & 1 & 0 & 1 \\
1 & 0 & 1 & 0 \\
0 & 1 & 0 & 1 \\
1 & 0 & 1 & 0 \\
\end{bmatrix}
\quad
D = \begin{bmatrix}
2 & 0 & 0 & 0 \\
0 & 2 & 0 & 0 \\
0 & 0 & 2 & 0 \\
0 & 0 & 0 & 2 \\
\end{bmatrix}, \quad 
F = \begin{bmatrix}
    -0.5\\-0.5\\0.5\\0.5
\end{bmatrix}
\]
 \caption{Example of $GF=0$}\label{fig:example}
\end{figure}

We notice that if we make this choice, there is a possibility of failure for our algorithm, namely, we notice the possibility of negative eigenvalues for $G$. We also notice that there is a possibility of the product $M=GF$ being equal to the zero vector, as can be seen in Figure~\ref{fig:example}. 

To solve this issue, we propose the use of 
\begin{align} \label{choiceG}
    G & = G_{\text{sym}} = D^{-1/2}WD^{-1/2} +2I \\
    G & = G_{\text{rw}} = D^{-1}W +2I \\
    G & = G_{\text{aff}} = W +nI
\end{align}
For $G_{\text{sym}}$ and $G_{\text{aff}}$, it is clear that symmetry holds. We shall now prove some relevant theorems.

\subsection{Theorems for the eigenvalues of the symmetric G}
\begin{theorem}
    $G_{\text{sym}}$ has eigenvalues $\lambda_{\text{sym}}$ with $\lambda_{\text{sym}}\in[1,3]$.
\end{theorem}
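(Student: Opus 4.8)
The plan is to reduce the claim to the classical spectral bound $\sigma(L_{\text{sym}}) \subseteq [0,2]$ for the symmetric normalized Laplacian $L_{\text{sym}} = I - D^{-1/2}WD^{-1/2}$, and then simply translate the spectrum. Writing $A_{\text{sym}} := D^{-1/2}WD^{-1/2}$, we have $A_{\text{sym}} = I - L_{\text{sym}}$, so $G_{\text{sym}} = A_{\text{sym}} + 2I = 3I - L_{\text{sym}}$. Hence $\lambda_{\text{sym}}$ is an eigenvalue of $G_{\text{sym}}$ if and only if $3 - \lambda_{\text{sym}}$ is an eigenvalue of $L_{\text{sym}}$, and once we know $\sigma(L_{\text{sym}}) \subseteq [0,2]$ this immediately forces $\lambda_{\text{sym}} \in [1,3]$.

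The main technical content is establishing $\sigma(L_{\text{sym}}) \subseteq [0,2]$, which I would do via the Rayleigh quotient since $L_{\text{sym}}$ is symmetric. For any nonzero $x \in \mathbb{R}^n$, set $y = D^{-1/2}x$ (valid since $D$ is invertible, which holds whenever the graph has no isolated vertices, an assumption implicit throughout); then
\[
\frac{x^T L_{\text{sym}} x}{x^T x} = \frac{y^T L y}{y^T D y}, \qquad L = D - W.
\]
The numerator admits the standard edge expansion $y^T L y = \sum_{\{i,j\}} w_{ij}(y_i - y_j)^2 \ge 0$ for nonnegative weights, while $y^T D y = \sum_i d_i y_i^2 > 0$; this gives the lower bound $\lambda_{\text{sym}} \ge 0$. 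For the upper bound I would invoke the signless-Laplacian identity $y^T(D + W)y = \sum_{\{i,j\}} w_{ij}(y_i + y_j)^2 \ge 0$, so that $y^T L y = y^T D y - y^T W y \le 2\, y^T D y$, yielding $\lambda_{\text{sym}} \le 2$. Combining the two bounds, every eigenvalue of $L_{\text{sym}}$ lies in $[0,2]$, hence every eigenvalue of $G_{\text{sym}} = 3I - L_{\text{sym}}$ lies in $[1,3]$.

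The only real obstacle is the clean verification of the two Rayleigh-quotient bounds — in particular remembering the signless-Laplacian trick for the upper estimate — together with the bookkeeping for the change of variables $x = D^{1/2}y$ and the tacit hypotheses that $W$ is entrywise nonnegative and that $D$ is invertible. Everything after that is a one-line spectral shift, so I expect the proof to be short.
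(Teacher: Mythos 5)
Your proposal is correct and follows essentially the same route as the paper: both reduce the claim to the classical bound $\sigma(L_{\text{sym}})\subseteq[0,2]$ and then shift the spectrum via $G_{\text{sym}}=A_{\text{sym}}+2I=3I-L_{\text{sym}}$. Your signless-Laplacian argument for the upper bound is just a repackaging of the inequality $(a-b)^2\le 2(a^2+b^2)$ (itself derived from $(a+b)^2\ge 0$) that the paper uses in its appendix to justify the same $[0,2]$ bound, so the two proofs are interchangeable.
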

\begin{proof}

    Let $A_{\text{sym}} = D^{-1/2}WD^{-1/2}$. 
    The eigenvalues of $A_{\text{sym}}$ are related to the eigenvalues of the symmetric normalized Laplacian, $L_{\text{sym}} = I - A_{\text{sym}}$. It is a well-established result that the eigenvalues of $L_{\text{sym}}$ are real and lie in the interval $[0, 2]$. We can use this to find the range of the eigenvalues of $A_{\text{sym}}$. If $\lambda_{A_{\text{sym}}}$ is an eigenvalue of $A_{\text{sym}}$, then $\lambda_{L_{\text{sym}}} = 1 - \lambda_{A_{\text{sym}}}$ is an eigenvalue of $L_{\text{sym}}$.
    Conversely, if $\lambda_{L_{\text{sym}}}$ is an eigenvalue of $L_{\text{sym}}$, then $\lambda_{A_{\text{sym}}} = 1 - \lambda_{L_{\text{sym}}}$ is an eigenvalue of $A_{\text{sym}}$.

    Since $\lambda_{L_{\text{sym}}} \in [0, 2]$, we can bound the eigenvalues of $A_{\text{sym}}$:
    \begin{align*}
        \lambda_{\max}(A_{\text{sym}}) &= 1 - \lambda_{\min}(L_{\text{sym}}) \leq 1 - 0 = 1 \\
        \lambda_{\min}(A_{\text{sym}}) &= 1 - \lambda_{\max}(L_{\text{sym}}) \geq 1 - 2 = -1
    \end{align*}
    Thus, the eigenvalues of $A_{\text{sym}}$ are real and lie in the interval $[-1, 1]$. Our matrix of interest is defined as $G_{\text{sym}} = A_{\text{sym}} + 2I$. The eigenvalues of $G_{\text{sym}}$ are obtained by shifting the eigenvalues of $A_{\text{sym}}$ by 2. If $\lambda_{A_{\text{sym}}}$ is an eigenvalue of $A_{\text{sym}}$, then $\lambda_{G_{\text{sym}}} = \lambda_{A_{\text{sym}}} + 2$ is an eigenvalue of $G_{\text{sym}}$.

    Using the eigenvalue bounds for $A_{\text{sym}}$ from Step 1, the eigenvalues of $G_{\text{sym}}$ are bounded as follows:
    \begin{align*}
        \lambda_{\min}(G_{\text{sym}}) &= \lambda_{\min}(A_{\text{sym}}) + 2 \ge -1 + 2 = 1 \\
        \lambda_{\max}(G_{\text{sym}}) &= \lambda_{\max}(A_{\text{sym}}) + 2 \le 1 + 2 = 3
    \end{align*}
    Since $G_{\text{sym}}$ is a symmetric matrix, all its eigenvalues are real. The derivation shows that these eigenvalues are bounded within the interval $[1, 3]$. Therefore eigenvalues of $G_{\text{sym}} = D^{-1/2}WD^{-1/2} + 2I$ are real and lie in the interval $[1, 3]$.
\end{proof}

\begin{theorem}
    $U_{\text{sym}} = G_{\text{sym}}-G_{\text{sym}}F(F^T G_{\text{sym}}F)^{-1}F^T G_{\text{sym}}$ has real eigenvalues.
\end{theorem}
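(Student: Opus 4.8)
The plan is to show that $U_{\mathrm{sym}}$ is in fact a real \emph{symmetric} matrix, so that its eigenvalues are automatically real by the spectral theorem; no spectral-gap or perturbation argument is needed. The key structural observation is that $U_{\mathrm{sym}}$ has the form $G - GF(F^TGF)^{-1}F^TG$ with $G = G_{\mathrm{sym}}$ symmetric, and every factor appearing in the correction term is arranged symmetrically around the middle block $(F^TGF)^{-1}$.

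First I would record the ingredients. By the previous theorem $G_{\mathrm{sym}} = D^{-1/2}WD^{-1/2} + 2I$ is symmetric with eigenvalues in $[1,3]$, hence symmetric positive definite. Since $F \in \mathbb{R}^{n\times(h-1)}$ was constructed with $h-1$ linearly independent columns, $F$ has full column rank, so $F^T G_{\mathrm{sym}} F$ is symmetric positive definite and therefore invertible, with $(F^T G_{\mathrm{sym}} F)^{-1}$ again symmetric. I would then compute the transpose of $U_{\mathrm{sym}}$ term by term: $G_{\mathrm{sym}}^T = G_{\mathrm{sym}}$, and, using that transposition reverses a product and that each factor is symmetric,
\[
\bigl(G_{\mathrm{sym}} F (F^T G_{\mathrm{sym}} F)^{-1} F^T G_{\mathrm{sym}}\bigr)^{T} = G_{\mathrm{sym}} F \,(F^T G_{\mathrm{sym}} F)^{-1}\, F^T G_{\mathrm{sym}}.
\]
Hence $U_{\mathrm{sym}}^T = U_{\mathrm{sym}}$, and the spectral theorem for real symmetric matrices gives that all eigenvalues of $U_{\mathrm{sym}}$ are real, which is the claim.

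As a refinement that also yields more information, I would note that $U_{\mathrm{sym}}$ is a congruence of an orthogonal projector: letting $G_{\mathrm{sym}}^{1/2}$ be the symmetric positive definite square root of $G_{\mathrm{sym}}$ and $B = G_{\mathrm{sym}}^{1/2} F$, a direct computation gives $U_{\mathrm{sym}} = G_{\mathrm{sym}}^{1/2}\bigl(I - B(B^T B)^{-1}B^T\bigr)G_{\mathrm{sym}}^{1/2}$, where $I - B(B^T B)^{-1}B^T$ is the orthogonal projector onto $\mathrm{Range}(B)^{\perp}$. Since an orthogonal projector is symmetric positive semidefinite and congruence by an invertible matrix preserves symmetry and positive semidefiniteness, $U_{\mathrm{sym}}$ is symmetric positive semidefinite, so its eigenvalues are real and nonnegative; this is the version I would actually state, as it simultaneously proves the theorem and justifies passing to $(G^{-1}+\mu FF^T)^{-1}$ in \eqref{SMW3b}. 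There is essentially no hard step here: the only point that requires a word of justification is the invertibility of $F^T G_{\mathrm{sym}} F$, which is exactly where the full-column-rank construction of $F$ and the positive definiteness of $G_{\mathrm{sym}}$ (the previous theorem) are used; the remaining work is the routine transpose computation and can be given in two or three lines.
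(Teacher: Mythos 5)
Your proof is correct and takes the same route as the paper: both arguments reduce the claim to observing that $U_{\mathrm{sym}}$ is a real symmetric matrix (since $G_{\mathrm{sym}}$ is symmetric and the correction term is arranged symmetrically about $(F^T G_{\mathrm{sym}} F)^{-1}$), whence the spectral theorem gives real eigenvalues. Your version is more complete than the paper's one-line proof, since you also justify the invertibility of $F^T G_{\mathrm{sym}} F$ and add the useful positive-semidefiniteness refinement via the congruence with an orthogonal projector.
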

\begin{proof}
Notice that if $G_{\text{sym}}$ is a real symmetric matrix, then the matrix 
$$U = G_{\text{sym}}-G_{\text{sym}}F(F^T G_{\text{sym}}F)^{-1}F^T G_{\text{sym}}$$ is also a real and symmetric. 
\end{proof}

\subsection{Theorems for the eigenvalues of the non-symmetric G}

We further propose the use of 
\begin{equation}
    G = G_{\text{rw}} = D^{-1}W +2I
\end{equation}
For $G_{\text{rw}}$, we do not have the property of symmetry therefore we use different techniques to prove its eigenvalues (and the eigenvalues of $U_{\text{rw}}$) are also real.

\begin{theorem}
    $G_{rw}$ has real eigenvalues and is generalized positive definite. That is $\bm{x}^T G_{rw}\bm{x} >0$
\end{theorem}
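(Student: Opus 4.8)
The plan is to leverage the fact that $G_{\text{rw}} = D^{-1}W + 2I$, although not symmetric, is a \emph{diagonal} similarity transform of the symmetric matrix $G_{\text{sym}} = D^{-1/2}WD^{-1/2} + 2I$ treated just above. Assuming (as throughout) that the graph has no isolated vertices, so $D^{\pm 1/2}$ is well defined, conjugation by $D^{1/2}$ gives
\[
D^{1/2}\, G_{\text{rw}}\, D^{-1/2} \;=\; D^{-1/2}WD^{-1/2} + 2I \;=\; G_{\text{sym}},
\]
i.e.\ $G_{\text{rw}} = D^{-1/2} G_{\text{sym}} D^{1/2}$. Since similar matrices share the same characteristic polynomial, and the preceding theorem shows $G_{\text{sym}}$ is real symmetric with spectrum in $[1,3]$, it follows at once that $G_{\text{rw}}$ has real eigenvalues, all lying in $[1,3]$; in particular they are positive, so $G_{\text{rw}}$ is invertible — exactly what is needed to pass from \eqref{SMW3} to \eqref{SMW3b} with $G = G_{\text{rw}}$.

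For the explicit quadratic-form statement I would read positivity in the $D$-weighted inner product $\langle \bm{x},\bm{y}\rangle_D := \bm{x}^T D \bm{y}$, the natural pairing for the random-walk normalization. First, $G_{\text{rw}}$ is self-adjoint for $\langle\cdot,\cdot\rangle_D$: since $W^T = W$ one has $D\, G_{\text{rw}} = W + 2D$, which is symmetric, hence
\[
\langle G_{\text{rw}}\bm{x},\,\bm{y}\rangle_D \;=\; \bm{x}^T (W+2D)\bm{y} \;=\; \langle \bm{x},\, G_{\text{rw}}\bm{y}\rangle_D .
\]
This re-derives the real spectrum and, with $\bm{y}=\bm{x}$, yields $\bm{x}^T D\, G_{\text{rw}}\,\bm{x} = \bm{x}^T(W+2D)\bm{x}$. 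It then remains to prove $W + 2D \succ 0$. Splitting $W + 2D = (D+W) + D$, the signless Laplacian satisfies $\bm{x}^T(D+W)\bm{x} = \tfrac12\sum_{i,j} w_{ij}(x_i+x_j)^2 \ge 0$ for nonnegative edge weights, while $D \succ 0$ because every degree is positive; hence $\bm{x}^T(W+2D)\bm{x} > 0$ for all nonzero $\bm{x}$, i.e.\ $\langle \bm{x}, G_{\text{rw}}\bm{x}\rangle_D > 0$. Equivalently, substituting $\bm{y} = D^{1/2}\bm{x}$ turns this into $\bm{y}^T G_{\text{sym}}\bm{y} > 0$, which holds by positive definiteness of $G_{\text{sym}}$.

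The main obstacle is really a subtlety to flag rather than a deep difficulty: the inner product in the positivity claim. The \emph{plain} form $\bm{x}^T G_{\text{rw}}\bm{x}$ need not be positive, since it is governed by the symmetric part $\tfrac12(D^{-1}W + WD^{-1})$, whose smallest eigenvalue can drop below $-2$ (for instance on a star graph with enough leaves), making $G_{\text{rw}} + G_{\text{rw}}^T$ indefinite. So the statement should be understood either at the level of eigenvalues — all real and positive, via the similarity above — or in the $D$-weighted sense $\bm{x}^T D\, G_{\text{rw}}\,\bm{x} > 0$; both follow from the two short arguments above, and the $D$-weighted inner product is also the one in which the downstream operator $U_{\text{rw}}$ built from $G_{\text{rw}}$ should be analyzed. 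Everything else — the conjugation identity and the signless-Laplacian bound — is routine.
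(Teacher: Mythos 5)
Your handling of the first claim coincides with the paper's: $G_{\text{rw}} = D^{-1/2} G_{\text{sym}} D^{1/2}$ is similar to the real symmetric matrix $G_{\text{sym}}$, hence has real eigenvalues in $[1,3]$ and is in particular invertible. Where you diverge is on the quadratic form, and there you are right and the paper's own argument is the one with the gap. The paper's proof hinges on the inequality
\[
\left|\frac{\bm{x}^T D^{-1}W\bm{x}}{\bm{x}^T\bm{x}}\right|\le 1,
\]
justified only by the fact that the eigenvalues of $D^{-1}W$ lie in $[-1,1]$. For a non-normal matrix the numerical range is not controlled by the spectrum, so this step does not follow; the Rayleigh quotient of $D^{-1}W$ is governed by its symmetric part, whose spectrum can be much wider. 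Your star-graph example makes this concrete: on $K_{1,n}$ one has $\tfrac12(D^{-1}W+WD^{-1})=\tfrac{n+1}{2n}A$ with $A$ the adjacency matrix, so its extreme eigenvalues are $\pm\tfrac{n+1}{2\sqrt{n}}$, which drop below $-2$ once $n\ge 14$. For such graphs $\tfrac12(G_{\text{rw}}+G_{\text{rw}}^T)$ is indefinite and $\bm{x}^T G_{\text{rw}}\bm{x}<0$ for some $\bm{x}$, so the theorem's literal assertion $\bm{x}^T G_{\text{rw}}\bm{x}>0$ is false in general, and the paper's derivation that the eigenvalues of $B=\tfrac12(D^{-1}W+WD^{-1})$ lie in $[-1,1]$ is incorrect.

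Your repair is the right one: positivity should be read in the $D$-weighted inner product, where $D G_{\text{rw}} = W + 2D$ is symmetric and $\bm{x}^T(W+2D)\bm{x} = \bm{x}^T(D+W)\bm{x} + \bm{x}^T D\bm{x} = \tfrac12\sum_{i,j}w_{ij}(x_i+x_j)^2 + \bm{x}^T D\bm{x} > 0$ for $\bm{x}\neq\bm{0}$ and positive degrees. This self-adjointness with respect to $\langle\cdot,\cdot\rangle_D$ re-derives the real spectrum and is consistent with the paper's later treatment of $U_{\text{rw}}$, which is shown to be similar to the symmetric matrix $D^{-1/2}\mathcal{W}PD^{-1/2}$ with $\mathcal{W}=W+2D$. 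The theorem statement and proof should be revised to claim positivity of the symmetric matrix $DG_{\text{rw}}=W+2D$ (equivalently of $G_{\text{sym}}$), not of the plain quadratic form of $G_{\text{rw}}$.
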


\begin{proof}
We want to prove that the matrix $G_{rw} = D^{-1}W + 2I$ is positive definite in a generalized sense, specifically that its quadratic form is non-negative for any vector $\bm{x}$. That is, we aim to show $\bm{x}^T G_{rw} \bm{x} \ge 0$. This is equivalent to proving that the symmetric part of $G_{rw}$, denoted $C$, is positive semidefinite.

Let $W$ be a symmetric affinity matrix and $D$ be the diagonal matrix of its weighted degrees for an undirected graph with $n$ vertices. The matrix of interest is $G = D^{-1}W + 2I$.

We begin with the known property that the eigenvalues of $L_{\text{sym}}$ are real and in $[0,2]$. Let $A_{\text{sym}} = D^{-1/2}WD^{-1/2}$ be the symmetric normalized adjacency matrix.
    By definition, the symmetric normalized Laplacian is $L_{\text{sym}} = D^{-1/2}(D-W)D^{-1/2}$. We can expand this:
    \begin{align}
        L_{\text{sym}} &= D^{-1/2}D D^{-1/2} - D^{-1/2}WD^{-1/2} = I - A_{\text{sym}}
    \end{align}
    If $\lambda_{A_{\text{sym}}}$ is an eigenvalue of $A_{\text{sym}}$, then $\lambda_{L_{\text{sym}}} = 1 - \lambda_{A_{\text{sym}}}$ is an eigenvalue of $L_{\text{sym}}$.
    Conversely, if $\lambda_{L_{\text{sym}}}$ is an eigenvalue of $L_{\text{sym}}$, then $\lambda_{A_{\text{sym}}} = 1 - \lambda_{L_{\text{sym}}}$ is an eigenvalue of $A_{\text{sym}}$. We are given that $\lambda_{L_{\text{sym}}} \in [0, 2]$. We can use this to find the range of the eigenvalues of $A_{\text{sym}}$:
    \begin{align}
        \lambda_{\max}(A_{\text{sym}}) &= 1 - \lambda_{\min}(L_{\text{sym}}) = 1 - 0 = 1 \\
        \lambda_{\min}(A_{\text{sym}}) &= 1 - \lambda_{\max}(L_{\text{sym}}) = 1 - 2 = -1
    \end{align}
    Thus, the eigenvalues of $A_{\text{sym}}$ are real and lie in the interval $[-1, 1]$.

    We note that the random walk matrix $A_{rw}=D^{-1}W$ is similar to $A_{\text{sym}}$, meaning they share the same real eigenvalues.    Therefore, the eigenvalues of $D^{-1}W$ are real and lie in $[-1, 1]$. We further note that 
\begin{equation}
    \left|\dfrac{x^T\frac{1}{2}(D^{-1}W+WD^{-1})x}{x^Tx}\right|=\left|\dfrac{x^TD^{-1}Wx}{x^Tx}\right|\leq 1.
\end{equation}
Therefore the eigenvalues of $B=\frac{1}{2}(D^{-1}W+WD^{-1})$ belong in $[-1,1]$

    Now, consider the symmetric part of our matrix $G_{rw}$.
    Let $A_{rw} = D^{-1}W$ and $B = \frac{1}{2}(A_{rw} + A_{rw}^T) = \frac{1}{2}(D^{-1}W + WD^{-1})$. As we derived in a previous discussion, the eigenvalues of $S$ are also in the interval $[-1, 1]$. The symmetric part of $G$ is:
    \begin{align}
        C &= \frac{1}{2}(G + G^T) \\
        &= \frac{1}{2}((D^{-1}W + 2I) + (D^{-1}W + 2I)^T) \\
        &= \frac{1}{2}(D^{-1}W + 2I + W D^{-1} + 2I) \\
        &= \frac{1}{2}(D^{-1}W + WD^{-1}) + 2I \\
        &= B + 2I
    \end{align}
    The eigenvalues of $C$ are obtained by shifting the eigenvalues of $B$ by 2. If $\lambda_B$ is an eigenvalue of $B$, then $\lambda_{C} = \lambda_B + 2$ is an eigenvalue of $C$. Using the eigenvalue bounds for $B$ (which are in $[-1, 1]$), the eigenvalues of $C$ are bounded as follows:
    $$ \lambda_{\min}(C) = \lambda_{\min}(B) + 2 \ge -1 + 2 = 1 $$
    $$ \lambda_{\max}(C) = \lambda_{\max}(B) + 2 \le 1 + 2 = 3 $$
    Since all eigenvalues of $C$ are real and satisfy $\lambda_{C} \in [1, 3]$, they are all strictly positive. This implies that $C$ is a symmetric positive definite matrix. Therefore
    $$ \bm{x}^T G_{rw} \bm{x} = \bm{x}^T C \bm{x} \ge 0,$$
    and the matrix $G_{rw}$ is generalized positive definite because its symmetric part $C$ is positive definite.
\end{proof}

\subsection{Spectral Properties of $U_{\text{rw}}$}

Unlike the symmetric variant $G_{\text{sym}}$, the matrix $U_{\text{rw}}$ is non-symmetric. Consequently, the reality of its eigenvalues is not theoretically guaranteed in the general case. While $U_{\text{rw}}$ is constructed from the product of the diagonal matrix $D^{-1}$ and the symmetric term $WP$, the non-commutativity of these terms prevents a direct similarity transformation to a symmetric matrix for general graphs.

However, extensive empirical validation suggests that the complex spectrum is a rare edge case that does not hinder practical performance. We conducted experiments on 1,000 distinct graph realizations generated via the Stochastic Block Model (SBM). In these trials, the first $k$ eigenvalues (up to $k=16$) of $U_{\text{rw}}$ were real in all cases. Complex eigenvalues were observed in only a single instance, and crucially, this occurred at the 9th eigenvalue. These results indicate that, for small enough $k$, while a theoretical guarantee for real eigenvalues is absent, $U_{\text{rw}}$ behaves sufficiently like a self-adjoint operator in practice to allow for effective and robust clustering.

To enforce the fairness constraint, we can use the Sherman-Morrison-Woodbury formula to express $U = (G^{-1} + \mu F F^T)^{-1}$ as the final step before applying the post-processing algorithm.

\begin{algorithm}[ht]
\caption{Fair-SMW (Sherman-Morrison-Woodbury)}
\label{alg:smw-cluster}
\begin{algorithmic}[1]
\Require Weighted adjacency matrix $W \in \mathbb{R}^{n \times n}$; degree matrix $D \in \mathbb{R}^{n \times n}$; group membership matrix $F \in \mathbb{R}^{n \times (h-1)}$; number of clusters $k \in \mathbb{N}$
\Ensure A partition of the $n$ vertices into $k$ clusters

\State Compute $G_{\mathrm{sym}} = D^{-1/2} W D^{-1/2} + 2 I$
\State Compute the projection matrix $M = G_{\mathrm{sym}}^{-1} F$
\State Construct the fairness-adjusted matrix $U_{\text{sym}} = G_{\mathrm{sym}} - M (F^T M)^{-1} F^T G_{\mathrm{sym}}$
\State Compute the top $k$ eigenvectors of $U$ corresponding to largest eigenvalues; let them form $H \in \mathbb{R}^{n \times k}$
\State Apply $k$-means clustering to the rows of $H$
\end{algorithmic}
\end{algorithm}

Based on Algorithm \eqref{alg:smw-cluster}, we can construct two alternative algorithms by substituting $G_{\text{sym}}$ with either $G_{\text{rw}} = D^{-1} W + 2 I$ or $G_{\text{aff}} = W+nI$.

\section{Time Complexity} 

In this section, we compute the asymptotic runtime under the assumption that the weighted adjacency matrix $W$, the degree matrix $D$, and the group-membership matrix $F$ are stored in sparse format in MATLAB. Fair-SMW shares a similar problem to S-Fair-SC, as both algorithms' complexity is dominated by computing $k$ eigenvectors in MATLAB. For each Krylov subspace reset, the complexity depends on the cost of matrix-vector products with $U$ within the function handle, and the computation of the Krylov subspace using MATLAB’s `eigs` function. Thus, using a modern Krylov subspace eigensolver such as ARPACK (via MATLAB’s eigs), the per-restart cost is $O(m + n h^2 + h^3 + n k^2)$, where $k$ is the number of eigenpairs, $h$ is the number of groups, and $n$ is the number of vertices \cite{lehoucq1998implicit}.

%\textcolor{red}{The Pre and After Computations of $k$ eignvalues has the same asympotatic $O(N)$ time complexity than s-FairSC. Is there a way to prove that our method uses fewer restarts? (IK, we saw once that it did, but why???)}

%%%%%%%%%%%%%%%%%%%%%%%%%%%%%%%%%%%%%%%%%%%%%
% SECTION: Experimental Results
% Comments:
% TODO:
%   1.
%%%%%%%%%%%%%%%%%%%%%%%%%%%%%%%%%%%%%%%%%%%%
\section{Experiments}
In this section, we evaluated the performance of the three variations of the Fair-SMW Model in comparison to the normalized-SC, S-Fair-SC, and Fair-SC algorithms. In MATLAB simulations, we analyzed average balance, overall runtime, and eigensolver-specific computation time on four publicly available datasets and one synthetic dataset. As well as perfromed a small test to see how our algorithms perform on dense and sparse matrix. To ensure a fair comparison, we modified the S-Fair-SC and Normalized-SC algorithms from \footnote{\label{fn: WangCode}\url{https://github.com/jiiwang/scalable_fair_spectral_clustering}} to enforce sparsity on the affinity matrix. The results were yielded using an MSI Modern 14 C7M with an AMD Ryzen 5 7530U processor (2.00 GHz) with Radeon Graphics, 16 GB of RAM, and 16 MB of L3 cache. The modified code is available in \href{https://github.com/lcambisaca/Fair-SMW}{github} \footnote{\url{https://github.com/lcambisaca/Fair-SMW}}.

\subsection{Data Sets}

\textbf{FacebookNET} The FacebookNet dataset is based on the High-School Contact and Friendship Networks gathered in December 2013 from a French high school (SocioPatterns). It comprises \(n = 155\) students represented as vertices, with edges indicating reported Facebook friendships. The students are categorized by gender into two groups: \(V_1\), which includes 70 girls, and \(V_2\), consisting of 85 boys. \\
\textbf{LastFM} The LastFM dataset is a social network dataset collected from Last.FM music platform, available through the Stanford Network Analysis Project (Rozemberczki and Sarkar, 2020). It contains $n = 5{,}576$ users from Asian countries, each represented as a vertex in a graph where edges indicate mutual follower relationships. The network comprises $m = 19{,}577$ undirected edges with a density of approximately $1.3 \times 10^{-4}$. \\
\textbf{Deezer} The Deezer Europe Social Network dataset is a user–item social graph collected from the music streaming platform Deezer using its public API in March 2020 (Rozemberczki and Sarkar, 2020). The network consists of $n = 28{,}281$ users from multiple European countries, where each node represents a user and undirected edges reflect mutual follower relationships. The graph includes $m = 92{,}752$ edges, with a density of approximately $2.0 \times 10^{-4}$ and a transitivity (clustering coefficient) of about $0.096$. Each node is enriched with feature vectors derived from the artists the user likes, and users are labeled with binary class information (gender), inferred from their name fields. \\
\textbf{German} The German Credit dataset is a tabular dataset originally derived from the Statlog project and is available through the UCI Machine Learning Repository (Hofmann, 1994). The original dataset comprises n = 1000 loan applicants, each labeled as either a “good” or “bad” credit risk, and is widely used for credit risk modeling, discrimination detection, and fairness-aware learning (I need to cite a paper from Dr. Dong's paper). Each applicant is described by 27 attributes, comprising a mix of numerical and categorical variables. In the version used in our experiments, we adopt the fairness-augmented, preprocessed variant from the Graph Mining Fairness Data repository (Dong et al., 2022), which expands the feature set to 27 columns. All our cleaned data sets where gathered from \cite{wang2023}\textsuperscript{\ref{fn: WangCode}} \cite{Li_2023} \footnote{\url{https://github.com/JiaLi2000/FNM}} Raw Datasets link can be found below \footnote{
\textbf{LastFM:}
\href{http://snap.stanford.edu/data/feather-lastfm-social.html}{http://snap.stanford.edu/data/feather-lastfm-social.html} \\
\textbf{FacebookNet:}\href{http://www.sociopatterns.org/datasets/high-school-contact-and-friendship-networks/}{ http://www.sociopatterns.org/datasets/high-school-contact-and-friendship-networks/} \\
\textbf{Deezer:} \href{https://snap.stanford.edu/data/feather-deezer-social.html}{https://snap.stanford.edu/data/feather-deezer-social.html} \\
\textbf{German:}\href{https://github.com/yushundong/Graph-Mining-Fairness-Data/tree/main/dataset/german}{https://github.com/yushundong/Graph-Mining-Fairness-Data/tree/main/dataset/\\german}
}
%\textit{Friendship}, \textit{LastFM}, and \textit{Deezer} are all social network datasets, while \textit{German} is a similarity graph created from i.i.d. tabular data. \textit{SBM} is a similarity matrix generated from a stochastic block model with random group assignments. \cite{Li_2023}. 

% Note need to change figure names
\section{Results and Analysis}

\textbf{Experiment 1:} 
We used the generate SBM function from \cite{wang2023} as a baseline test to examine algorithms scalability. In this experiment, we evaluated the average balance, total runtime, and eigs runtime across SC, S-Fair-SC, AFF-Fair-SMW, SYM-Fair-SMW, and RW-Fair-SMW. Figure \ref{fg - SBMBalance} reports computational time across model sizes ranging from $n=1000$ to $n=10000$, with $h=2$, $k=2$, and edge probability proportional to $\left(\tfrac{\log n}{n}\right)^{2/3}$. Similar to S-Fair-SC, our three graph variants recovered the fair ground truth clustering with identical error rates. Figure \ref{fg - SBMRunTime} compares total runtime, showing that AFF-Fair-SMW and RW-Fair-SMW achieved faster performance than S-Fair-SC by reducing computational overhead outside the \texttt{eigs} routine.  Figure \ref{fg - EigsSBMRunTime} demonstrates that the \texttt{eigs} component itself contributed only a small fraction of the overall runtime. \\

\textbf{Experiment 2:} We used the FacebookNet dataset to evaluate all algorithms across $k = 2$ to $k = 15$. Following \cite{chierichetti2017}, fairness in clustering was measured using the average balance, defined for a cluster $\mathcal{C}_\ell$ as in equation \ref{balance}.
The average balance is then given by
\[
\text{Average\_Balance} := \frac{1}{k} \sum_{\ell=1}^{k} \text{balance}(\bm{C}_\ell).
\]

Figure \ref{fg - FriendshipBalance} uses the balance definition from \cite{wang2023},  illustrates the average balance (as defined in \cite{wang2023}), where higher values indicate fairer clusterings. 
We observe that AFF-Fair-SMW consistently maintains an average balance above 50\%, while the other algorithms achieve comparable fairness levels. Figures \ref{fg - FriendshipRunTime} and \ref{fg - EigsFriendshipRunTime} show that all methods have similar runtimes, with the majority of computation occurring outside the \texttt{eigs} routine.

\textbf{Experiment 3:} 
We used the LastFMNet dataset to evaluate average balance and runtime for all algorithms across $k = 2$ to $15$. As shown in Figure \ref{fg - LastfmBalance}, our algorithms maintained an average balance comparable to S-Fair-SC. Figure \ref{fg - LastfmRunTime} presents total runtimes, where for $k = 2$ to $4$, AFF-Fair-SMW significantly outperformed S-Fair-SC. Figure \ref{fg - EigsLastfmRunTime} explains this improvement, showing that it is largely due to reduced runtime in the \texttt{eigs} solver. These results highlight that when the \texttt{eigs} solver dominates time complexity, AFF-Fair-SMW is the most efficient choice.

\textbf{Experiment 4:} 
We used the German dataset to evaluate average balance and runtime for all algorithms across $k = 2$ to $15$. Figure \ref{fg - GermanBalance} shows that our methods maintained an average balance comparable to S-Fair-SC. However, Figure \ref{fg - GermanRunTime} reveals that AFF-Fair-SMW incurred the highest overall runtime, while the remaining algorithms exhibited similar performance. Figure \ref{fg - EigsGermanRunTime} indicates that this increased complexity is not due to the \texttt{eigs} solver. 
Additional testing suggests that the overhead arises from the $k$-means step, since our overall algorithm performs fewer matrix operations. This highlights the importance of considering the efficiency of the $k$-means component when evaluating total runtime.
\\
\textbf{Experiment 5:} 
We used the Deezer dataset to evaluate average balance and runtime for all algorithms across $k = 2$ to $15$. Figure \ref{fg - DeezerBalance} shows that all algorithms exhibit fluctuations in balance, sometimes outperforming and other times underperforming S-Fair-SC, yet maintaining a relatively high average balance overall. Figure \ref{fg - DeezerRunTime} highlights runtime comparisons, where AFF-Fair-SMW achieves a dramatic reduction in runtime, lowering the initial clustering time from over 30 seconds to under one second. Figure \ref{fg - EigsDeezerRunTime} illustrates that this improvement stems from reduced \texttt{eigs} solver complexity, reinforcing that when the \texttt{eigs} function dominates total runtime, AFF-Fair-SMW is the most efficient algorithm.

\textbf{Additional Experiment: Dense vs. Sparse Matrices.} 
We further examined the impact of matrix density on eigs runtime by testing both dense and sparse graphs. For the sparse case, we used a stochastic block model (SBM) and modified the algorithm to construct a spanning tree, ensuring the graph contained at least one strongly connected component suitable for spectral clustering. Our results revealed several trends. For very dense matrices, the overall runtime increased significantly as $n$ grew, with all algorithms performing similarly to S-Fair-SMW. In this setting, AFF-Fair-SC showed only modest improvements, and the observed runtime differences were largely due to the \texttt{eigs} solver. 

In contrast, for very sparse matrices we observed dramatic improvements as AFF-Fair-SMW reduced runtime to around $0.2$ seconds, while S-Fair-SC required approximately $3.5$ seconds. As $n$ increased, the runtime of S-Fair-SC and related methods grew substantially, whereas AFF-Fair-SMW maintained stable performance. Although the exact reason for this robustness remains uncertain, we suspect that the eigen-gap plays a critical role; preliminary analysis suggests a large eigen-gap in the sparse case, but further research is needed to confirm this.  

To further evaluate the robustness of our algorithms, we simulated a checkerboard graph—a hypothetically challenging graph that often poses difficulties for spectral clustering. Our results show that AFF-Fair-SMW successfully converges on this structure. Additionally, we tested extremely sparse matrices, where AFF-Fair-SMW reliably converged to accurate eigenvalues, while S-Fair-SC, Sym-Fair-SMW, and RW-Fair-SMW occasionally failed to converge due to limitations in their eigen-solvers. Although additional iterations could mitigate these failures, our findings suggest that AFF-Fair-SMW is uniquely resilient, effectively converging under both sparse and dense conditions.

\label{sec:experiments} %NEED TO CHNAGE FIG LABELS SADDDDDDDDD
\begin{figure}[htp]
    \centering
    \begin{subfigure}[b]{0.28\textwidth}
\includegraphics[width=\textwidth]{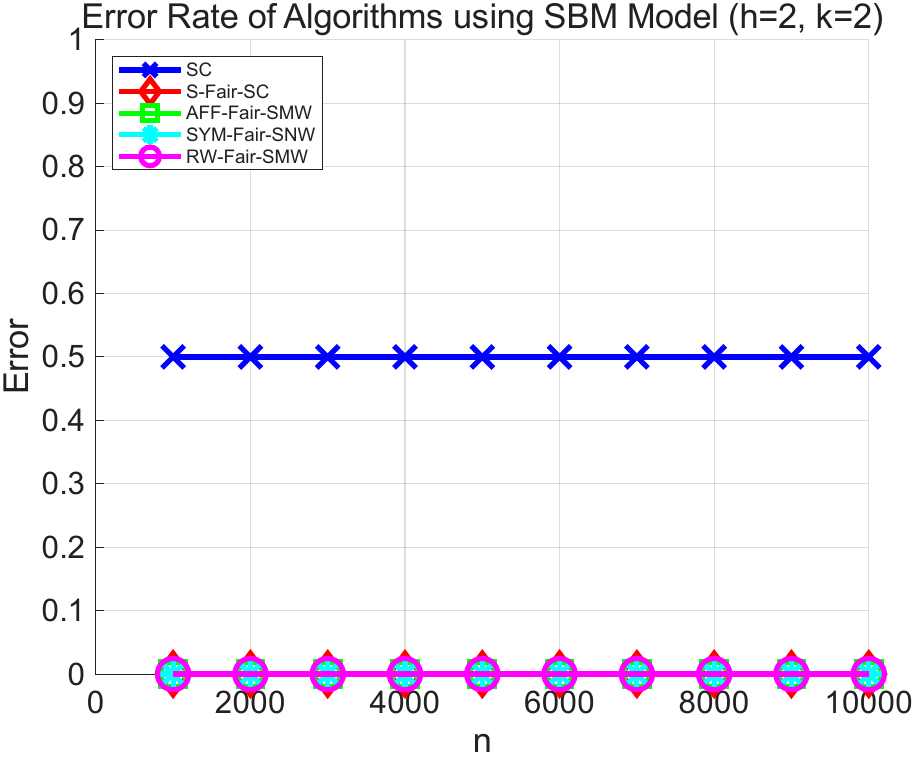}
        \caption{SBM - Error}
        \label{fg - SBMBalance}
    \end{subfigure}
    \hfill
    \begin{subfigure}[b]{0.28\textwidth}
        \includegraphics[width=\textwidth]{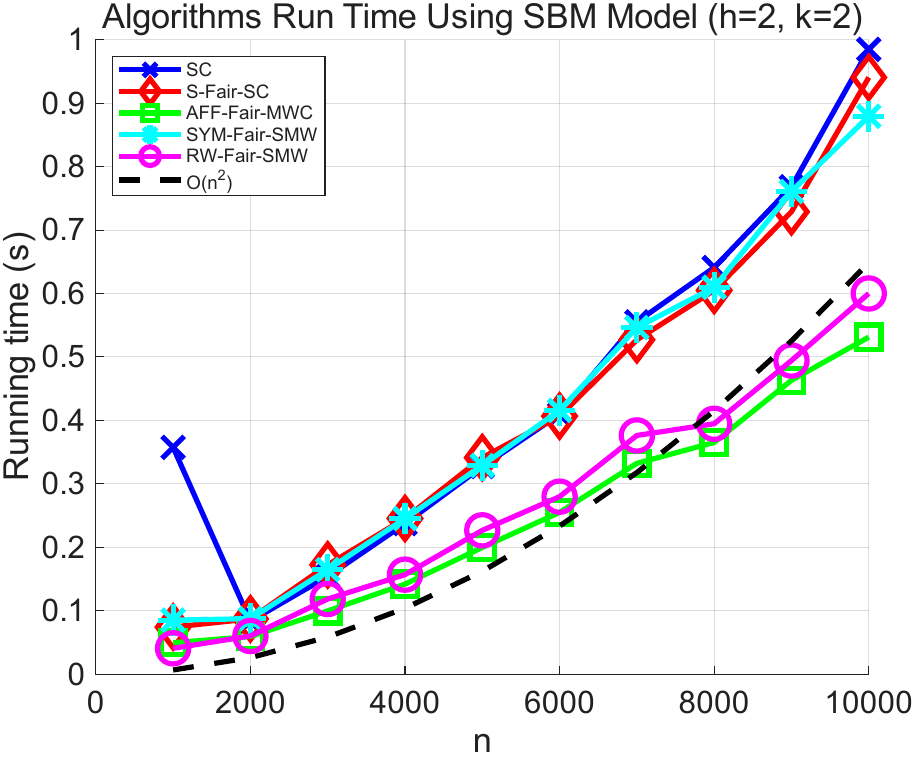}
        \caption{SBM - Run Time}
        \label{fg - SBMRunTime}
    \end{subfigure}
    \hfill
    \begin{subfigure}[b]{0.28\textwidth}
        \includegraphics[width=\textwidth]{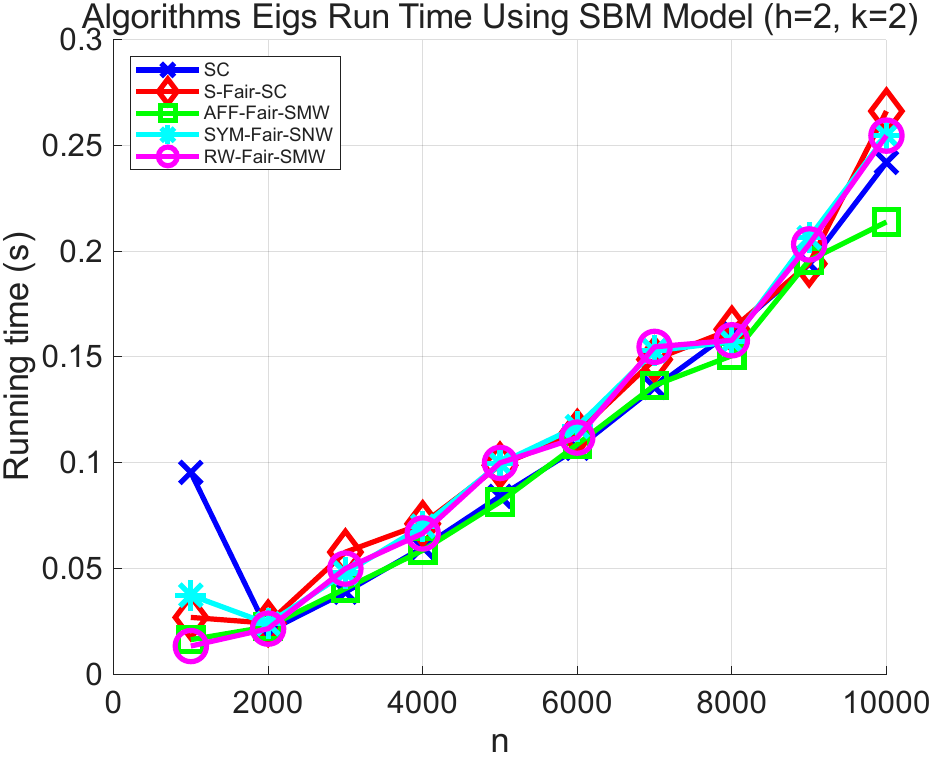}
        \caption{SBM - Eigs Time}
        \label{fg - EigsSBMRunTime}
    \end{subfigure}

    %\vspace{-0.7em}

    % Row 2: Friendship
    \begin{subfigure}[b]{0.28\textwidth}
        \includegraphics[width=\textwidth]{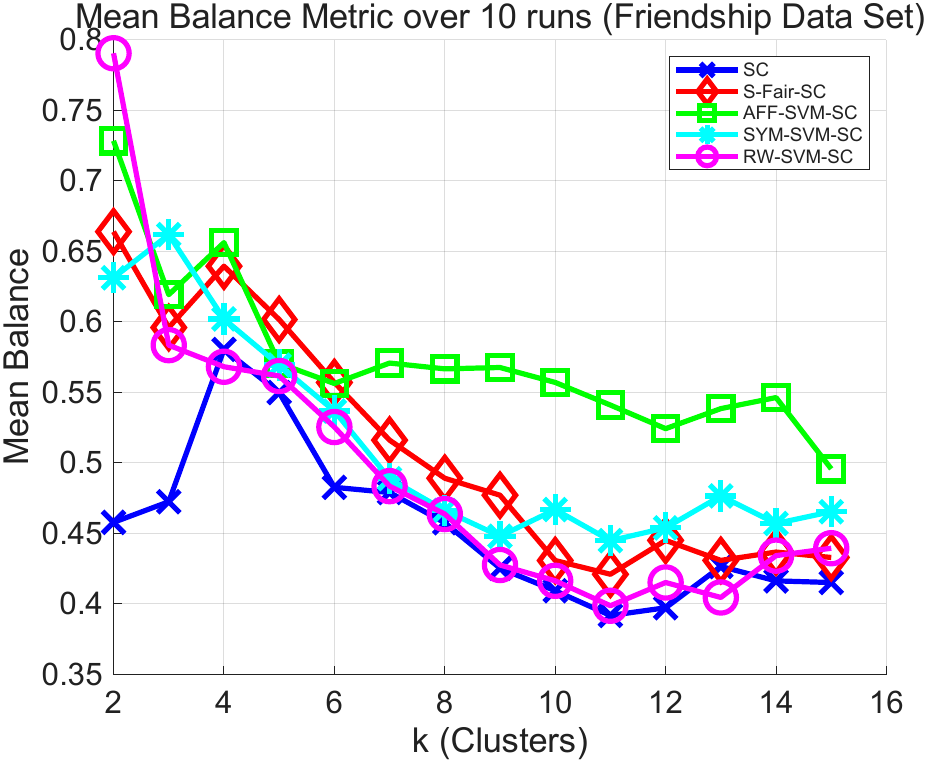}
        \caption{Friendship - Balance}
        \label{fg - FriendshipBalance}

    \end{subfigure}
    \hfill
    \begin{subfigure}[b]{0.28\textwidth}
        \includegraphics[width=\textwidth]{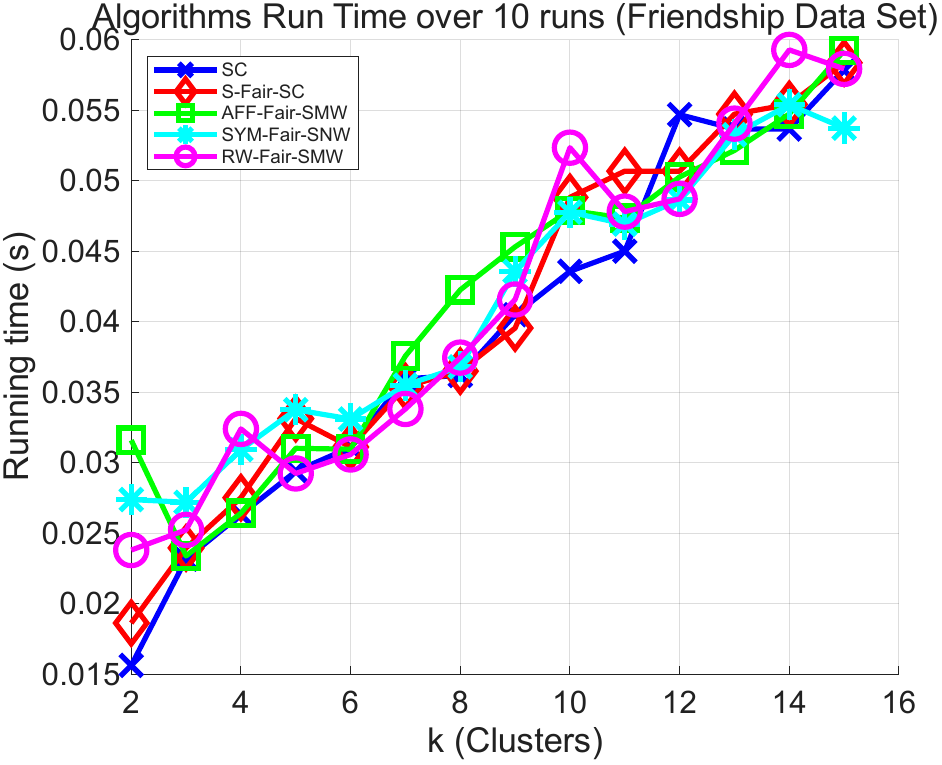}
        \caption{Friendship - Run Time}
        \label{fg - FriendshipRunTime}
    \end{subfigure}
    \hfill
    \begin{subfigure}[b]{0.28\textwidth}
        \includegraphics[width=\textwidth]{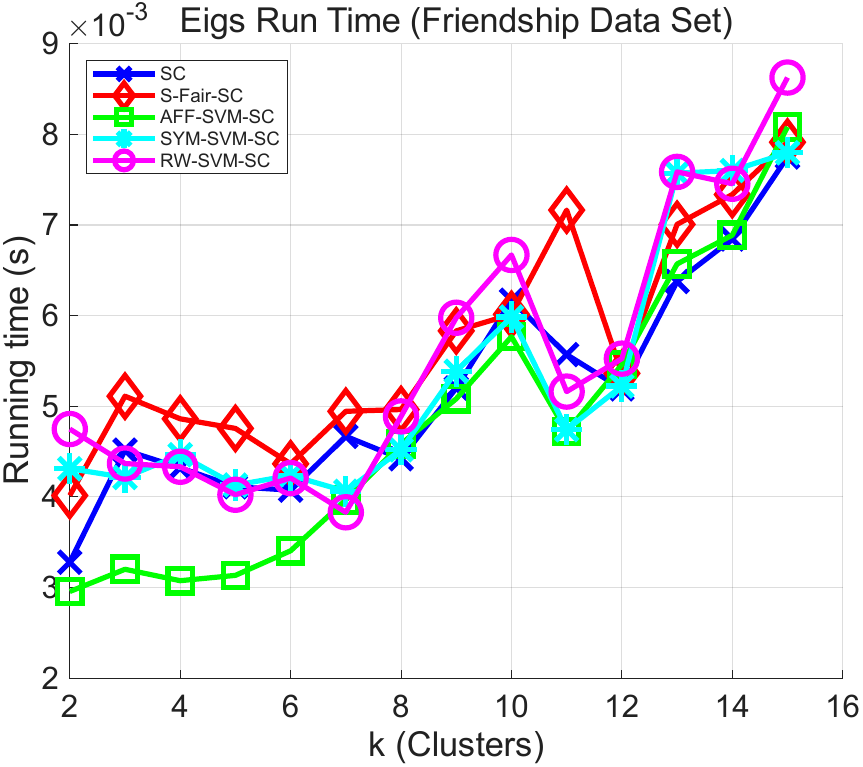}
        \caption{Friendship - Eigs Time}
        \label{fg - EigsFriendshipRunTime}

    \end{subfigure}

    %\vspace{-0.7em}

    % Row 3: LastFm
    \begin{subfigure}[b]{0.28\textwidth}
        \includegraphics[width=\textwidth]{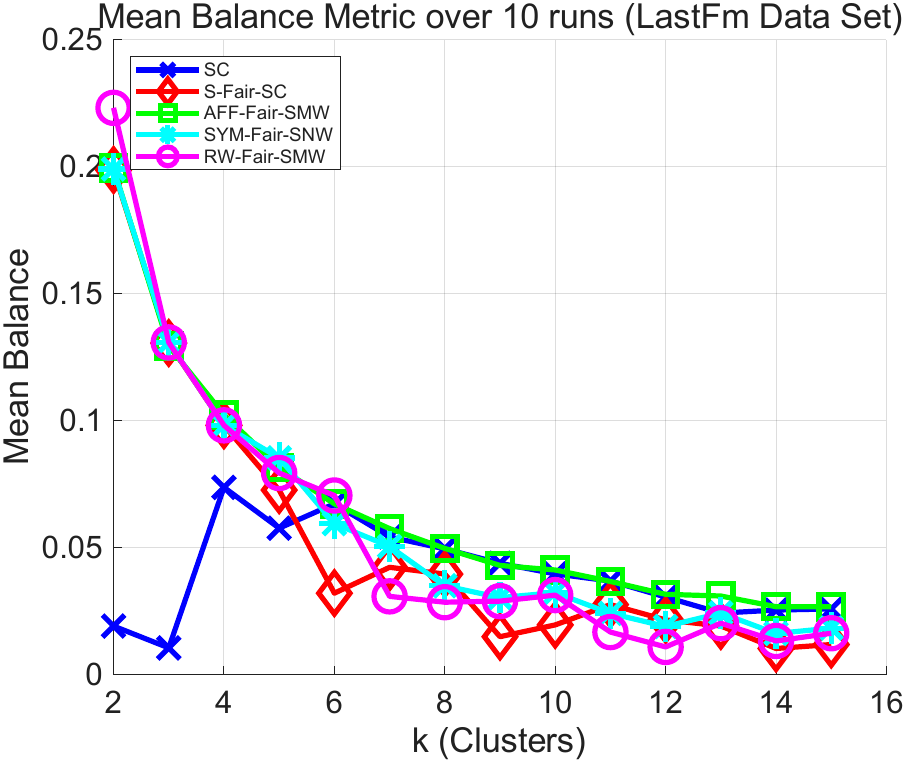}
        \caption{LastFm - Balance}
        \label{fg - LastfmBalance}

    \end{subfigure}
    \hfill
    \begin{subfigure}[b]{0.28\textwidth}
        \includegraphics[width=\textwidth]{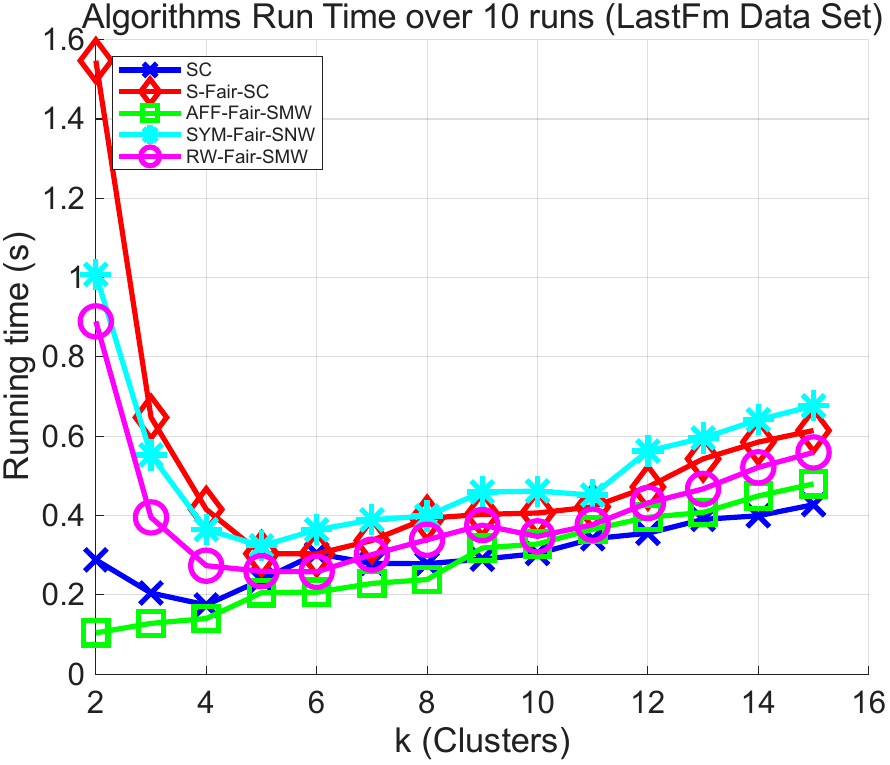}
        \caption{LastFm - Run Time}
        \label{fg - LastfmRunTime}

    \end{subfigure}
    \hfill
    \begin{subfigure}[b]{0.28\textwidth}
        \includegraphics[width=\textwidth]{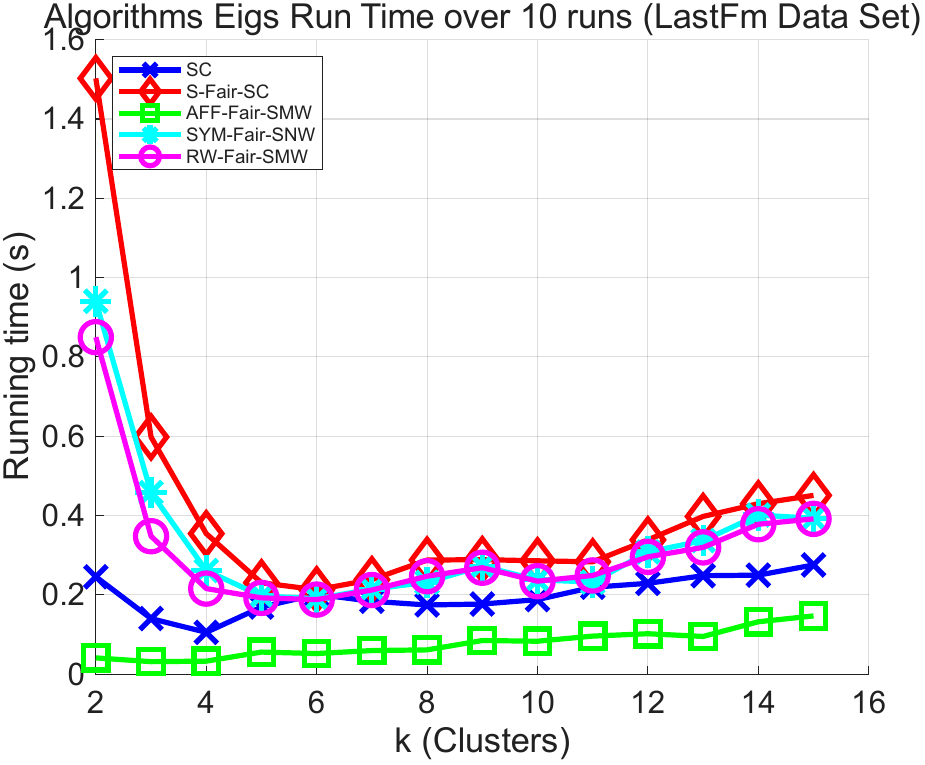}
        \caption{LastFm - Eigs Time}
        \label{fg - EigsLastfmRunTime}

    \end{subfigure}

    %\vspace{-0.7em}

    % Row 4: German
    \begin{subfigure}[b]{0.28\textwidth}
        \includegraphics[width=\textwidth]{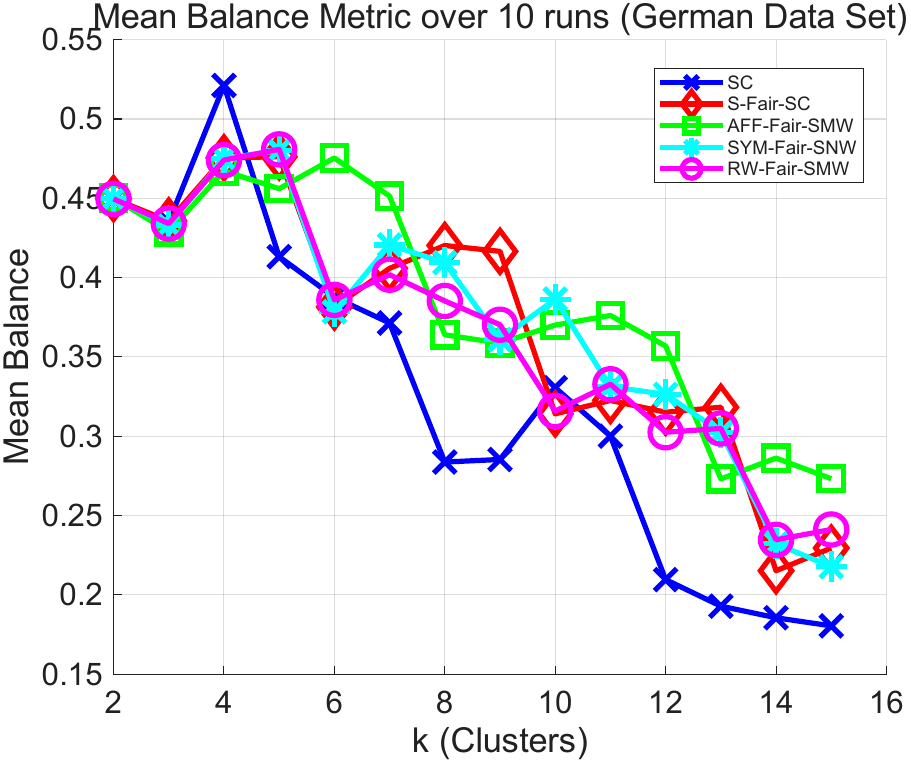}
        \caption{German - Balance}
        \label{fg - GermanBalance}

    \end{subfigure}
    \hfill
    \begin{subfigure}[b]{0.28\textwidth}
        \includegraphics[width=\textwidth]{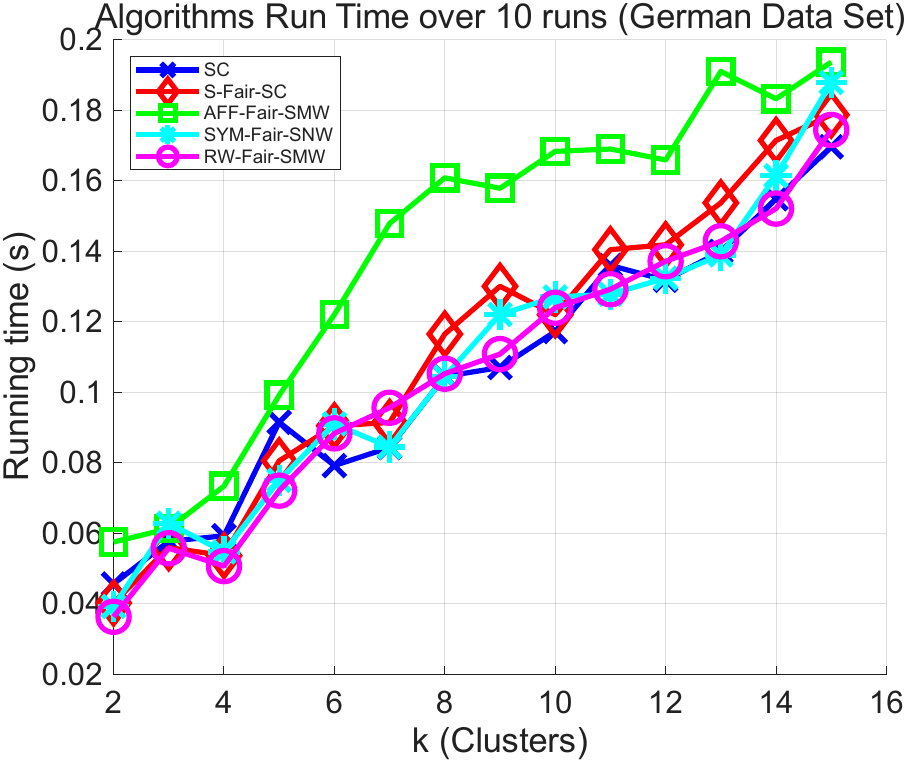}
        \caption{German - Run Time}
        \label{fg - GermanRunTime}

    \end{subfigure}
    \hfill
    \begin{subfigure}[b]{0.28\textwidth}
        \includegraphics[width=\textwidth]{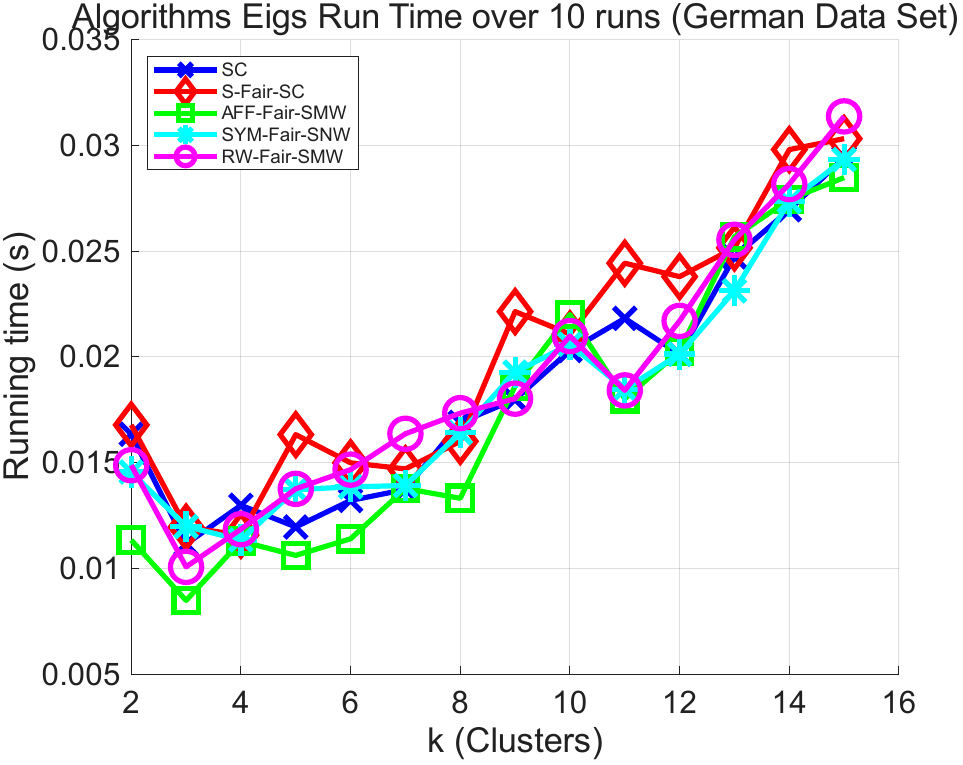}
        \caption{German - Eigs Time}
        \label{fg - EigsGermanRunTime}

    \end{subfigure}

    %\vspace{-0.7em}

    % Row 5: Deezer
    \begin{subfigure}[b]{0.28\textwidth}
        \includegraphics[width=\textwidth]{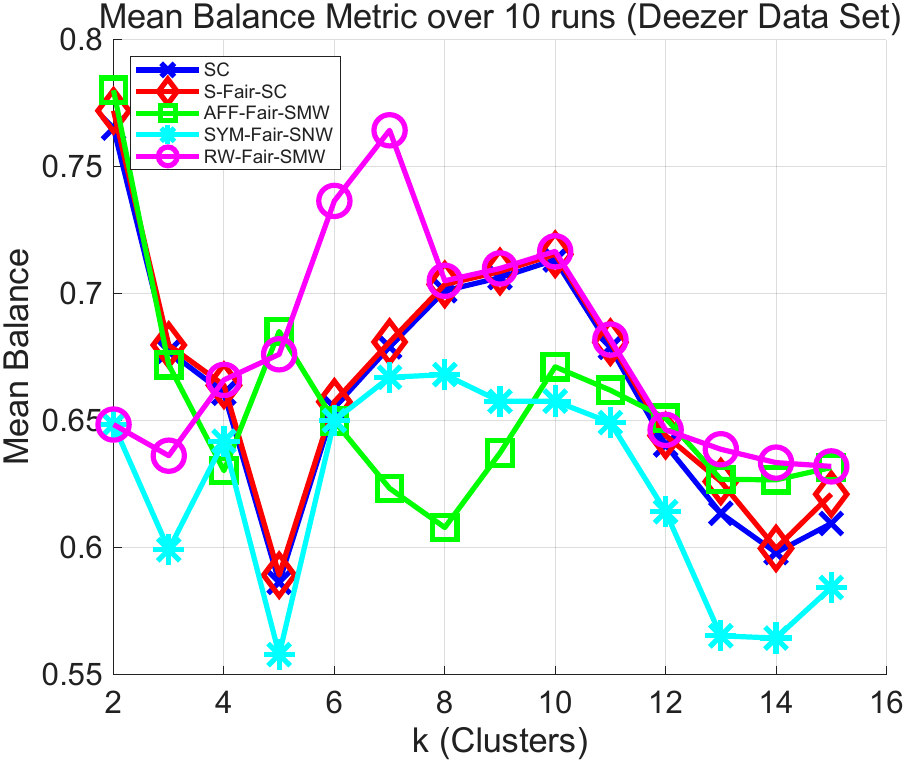}
        \caption{Deezer - Balance}
        \label{fg - DeezerBalance}

    \end{subfigure}
    \hfill
    \begin{subfigure}[b]{0.28\textwidth}
        \includegraphics[width=\textwidth]{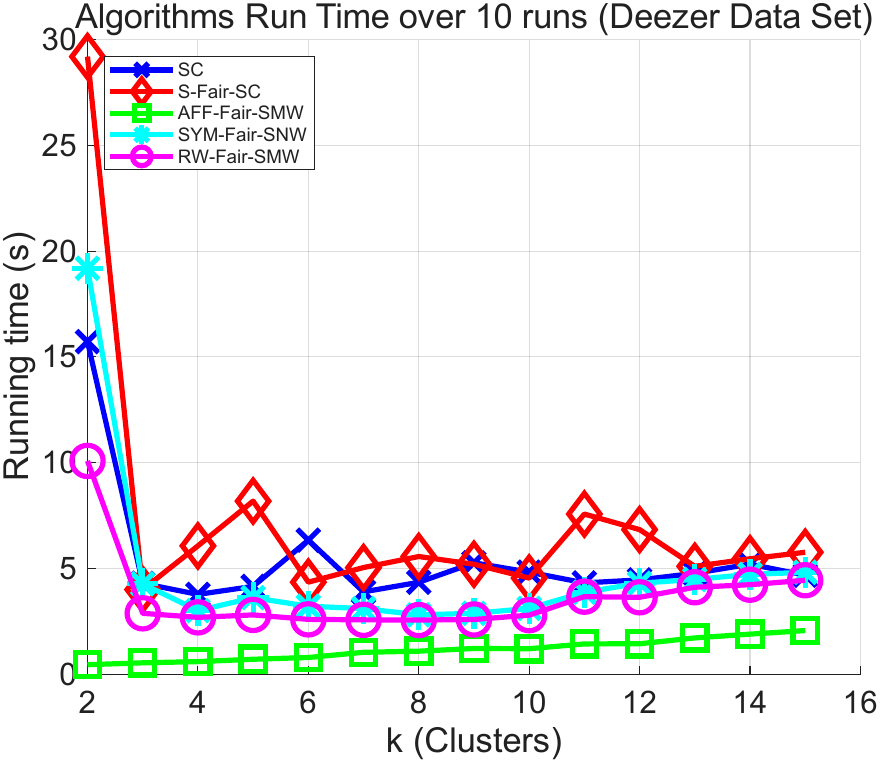}
        \caption{Deezer - Run Time}
        \label{fg - DeezerRunTime}

    \end{subfigure}
    \hfill
    \begin{subfigure}[b]{0.28\textwidth}
        \includegraphics[width=\textwidth]{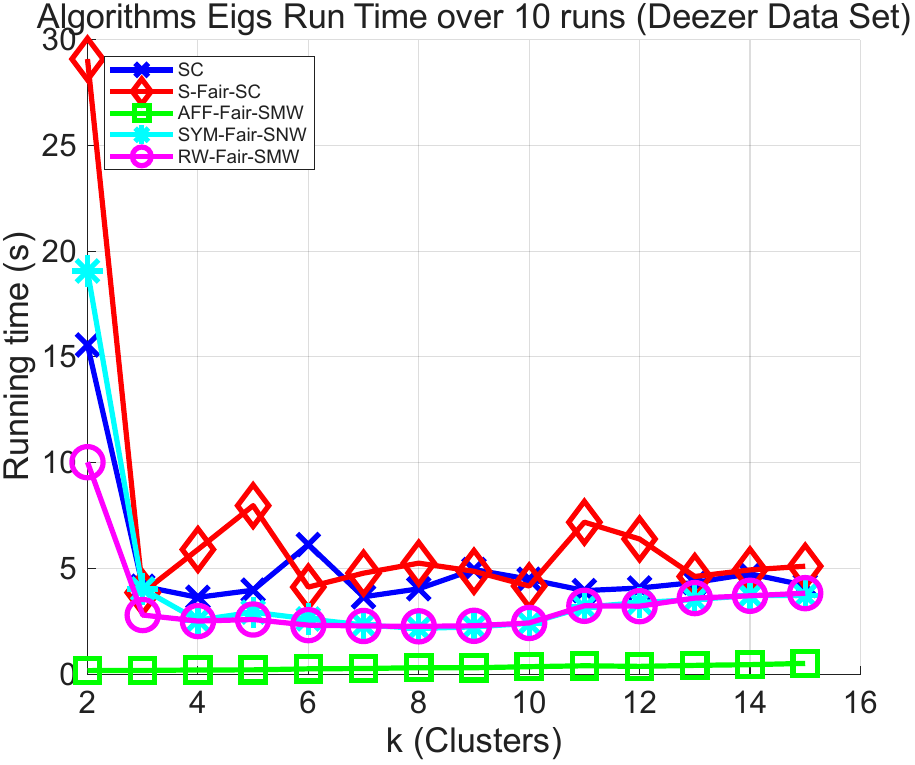}
        \caption{Deezer - Eigs Time}
        \label{fg - EigsDeezerRunTime}

    \end{subfigure}

    %\vspace{-1em}
    \caption{(Left) Balance, (Center) Algorithm Run Time, (Right) Eigensolver Run Time}
\end{figure}
\clearpage  % Force previous content to flush

%%%%%%%%%%%%%%%%%%%%%%%%%%%%%%%%%%%%%%%%%%%%%
% SECTION: Conclusions
% Comments:
% TODO:
%   1.
%%%%%%%%%%%%%%%%%%%%%%%%%%%%%%%%%%%%%%%%%%%%%

\begin{figure}[ht]
    \centering
    % Row 1
    \begin{subfigure}[b]{0.35\textwidth}
        \includegraphics[width=\linewidth]{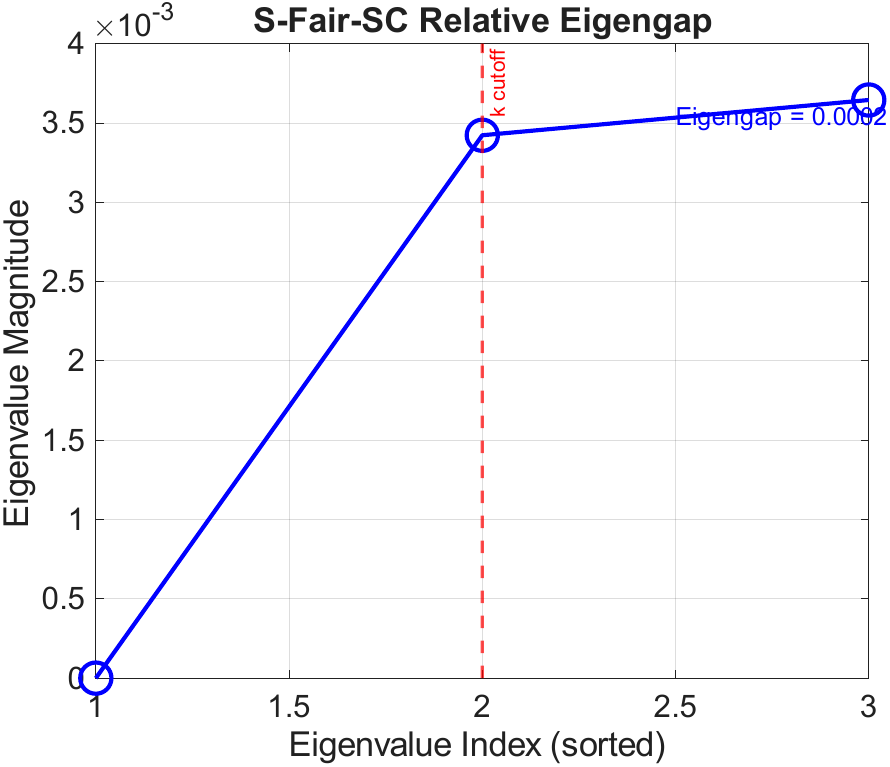}
        \caption{Deezer Eigen Gap}
    \end{subfigure}
    \hfill
    \begin{subfigure}[b]{0.35\textwidth}
        \includegraphics[width=\linewidth]{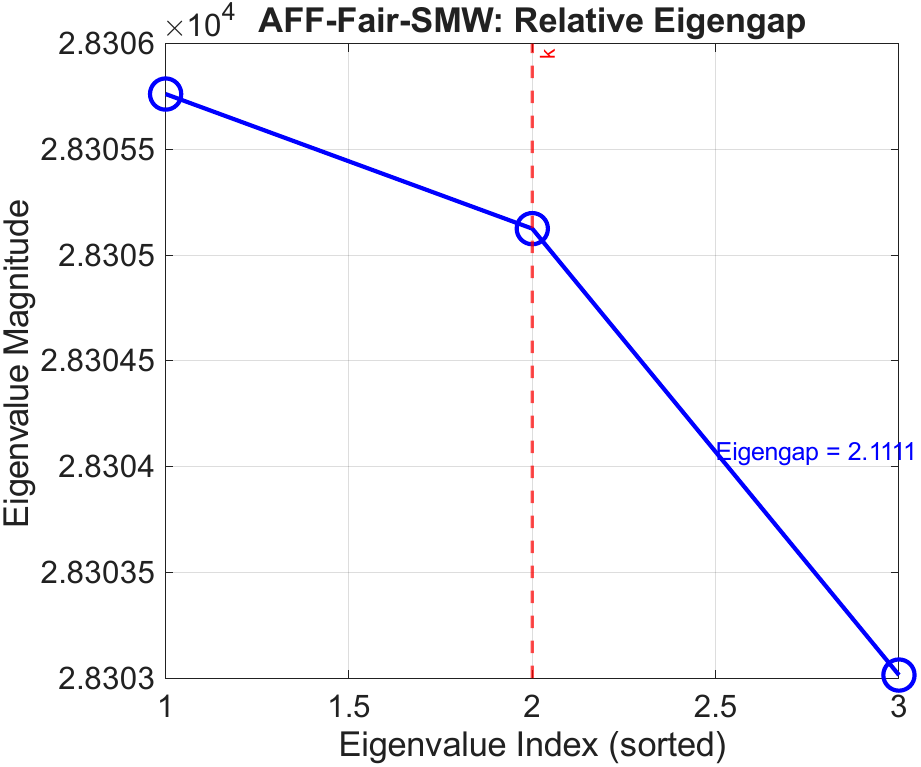}
        \caption{Deezer Eigen Gap}
    \end{subfigure}

    \medskip

    % Row 2
    \begin{subfigure}[b]{0.35\textwidth}
        \includegraphics[width=\linewidth]{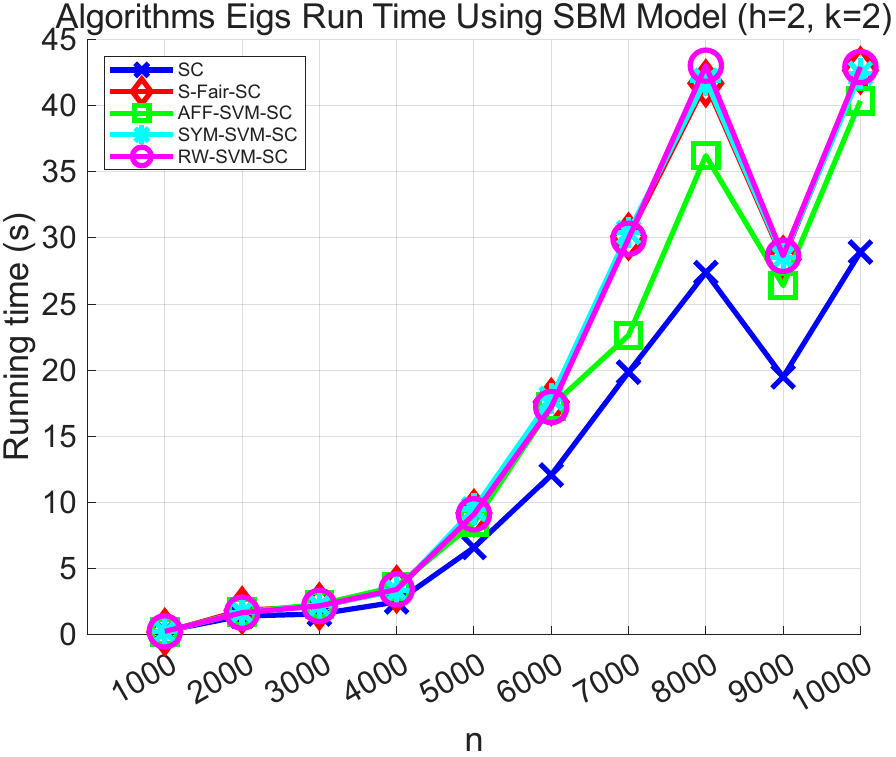}
        \caption{Dense Eigs RunTime}
    \end{subfigure}
    \hfill
    \begin{subfigure}[b]{0.35\textwidth}
        \includegraphics[width=\linewidth]{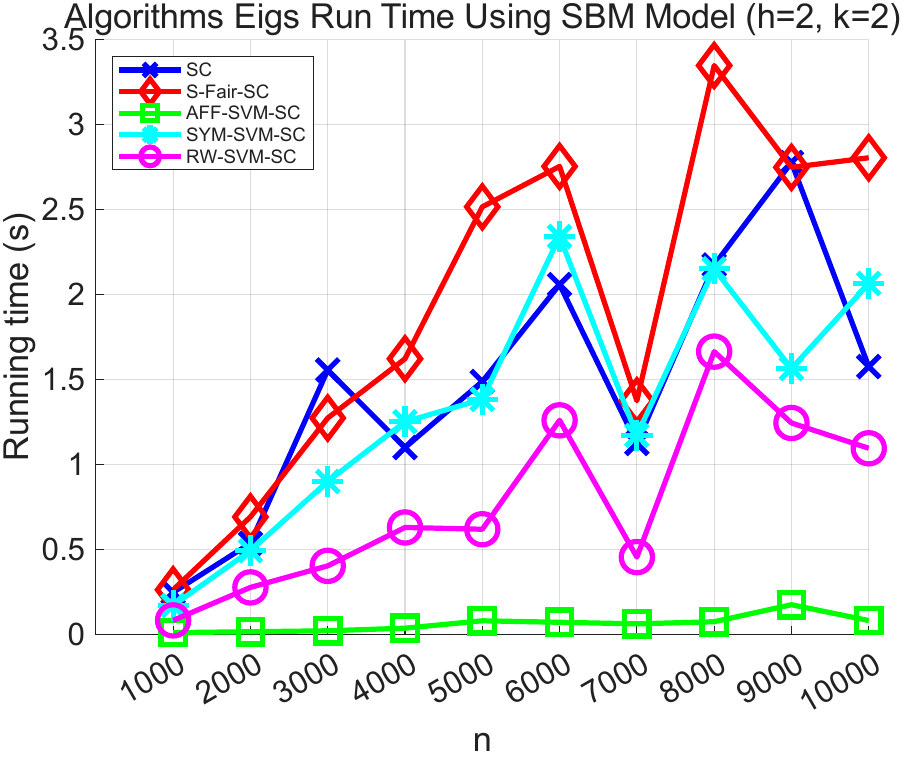}
        \caption{Sparse Eigs RunTime}
    \end{subfigure}

    \medskip

    % Row 3
    \begin{subfigure}[b]{0.32\textwidth}
        \includegraphics[width=\linewidth]{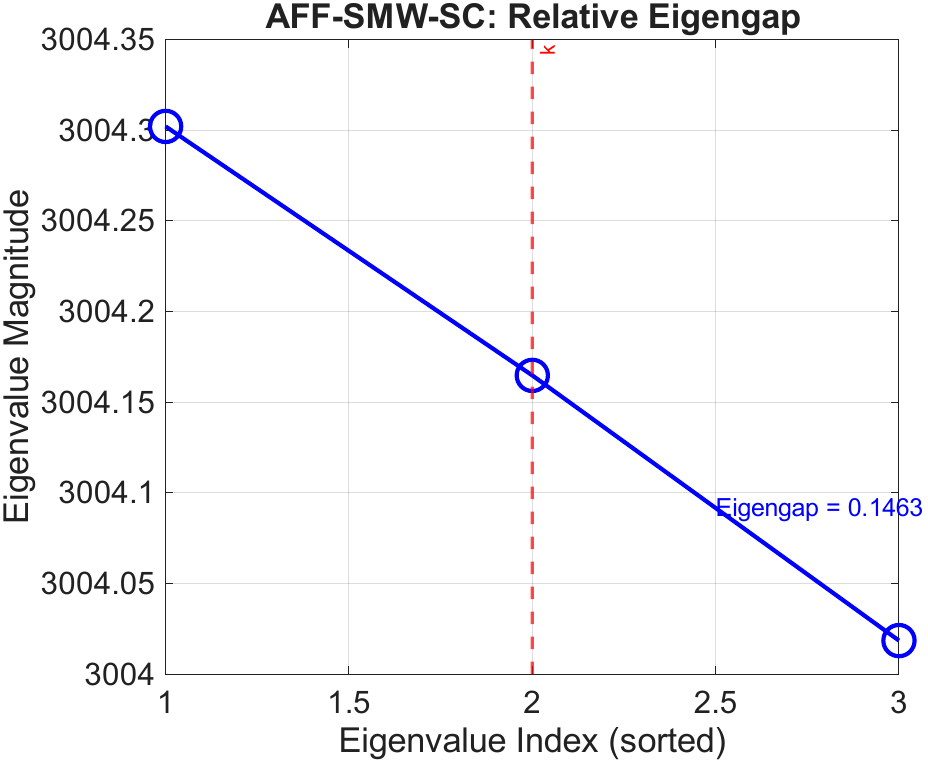}
        \caption{$n=3000$}
    \end{subfigure}
    \hfill
    \begin{subfigure}[b]{0.32\textwidth}
        \includegraphics[width=\linewidth]{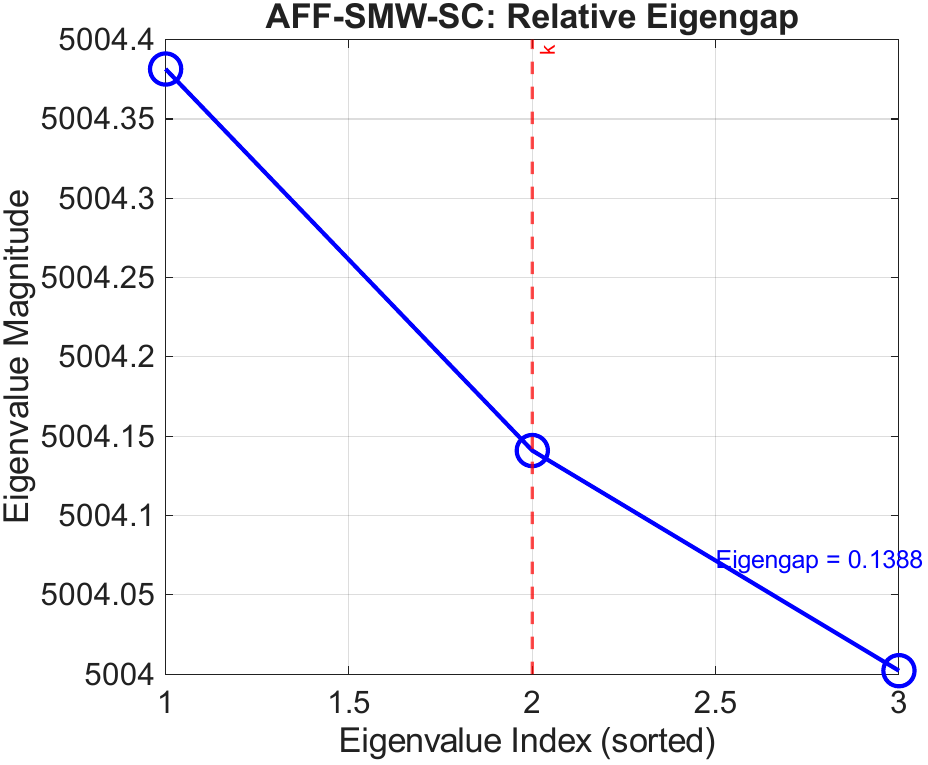}
        \caption{$n=5000$}
    \end{subfigure}
    \hfill
    \begin{subfigure}[b]{0.32\textwidth}
        \includegraphics[width=\linewidth]{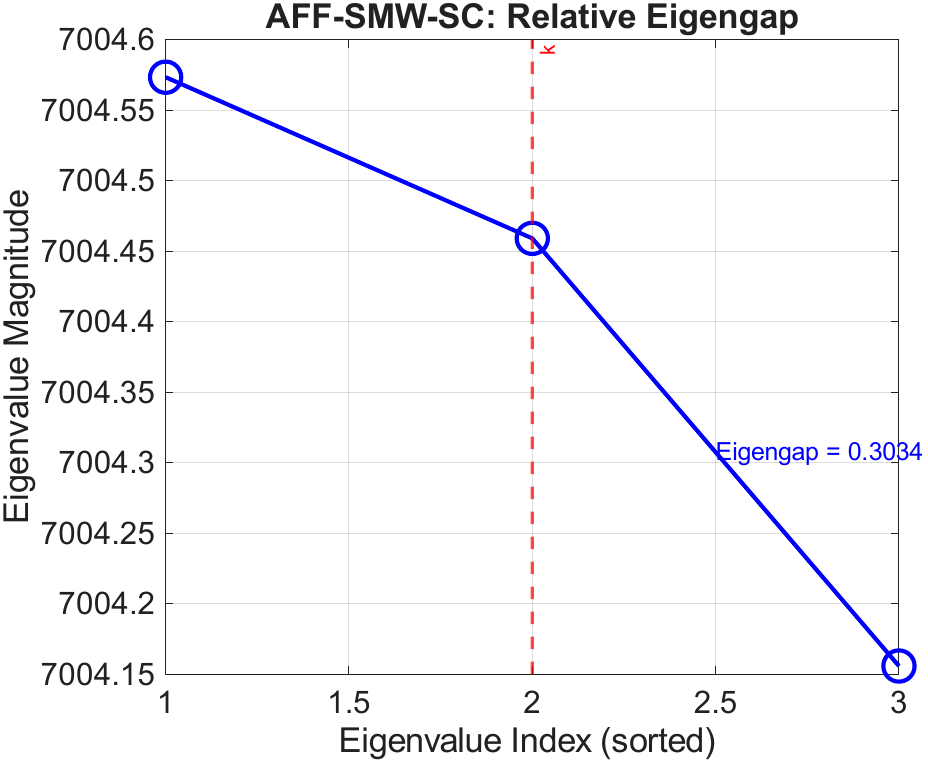}
        \caption{$n=7000$}
    \end{subfigure}

    \medskip

    % Row 4
    \begin{subfigure}[b]{0.32\textwidth}
        \includegraphics[width=\linewidth]{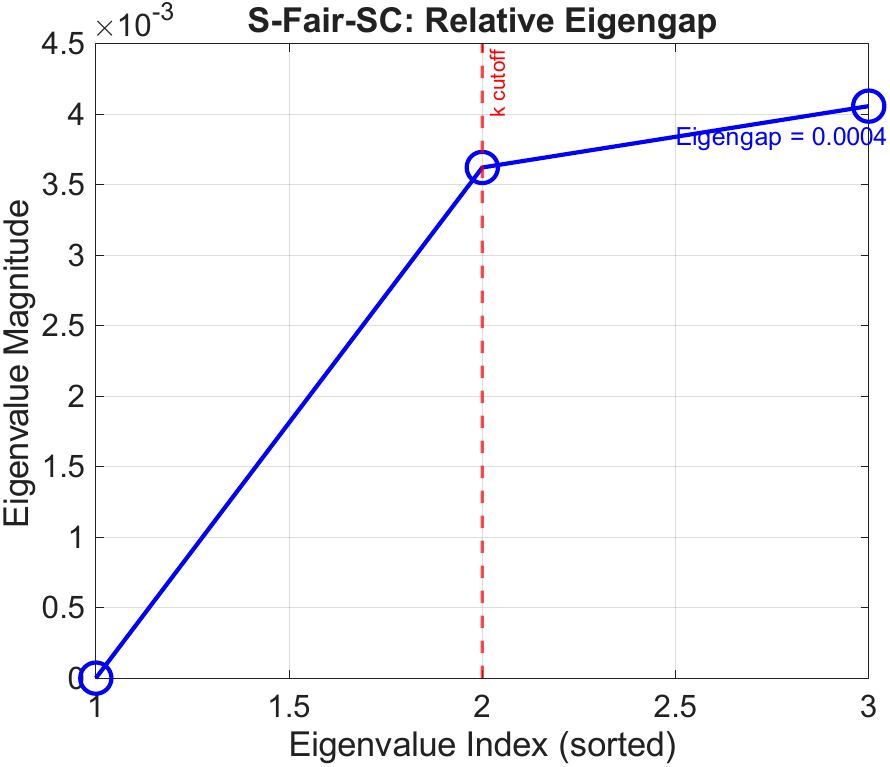}
        \caption{$n=3000$}
    \end{subfigure}
    \hfill
    \begin{subfigure}[b]{0.32\textwidth}
        \includegraphics[width=\linewidth]{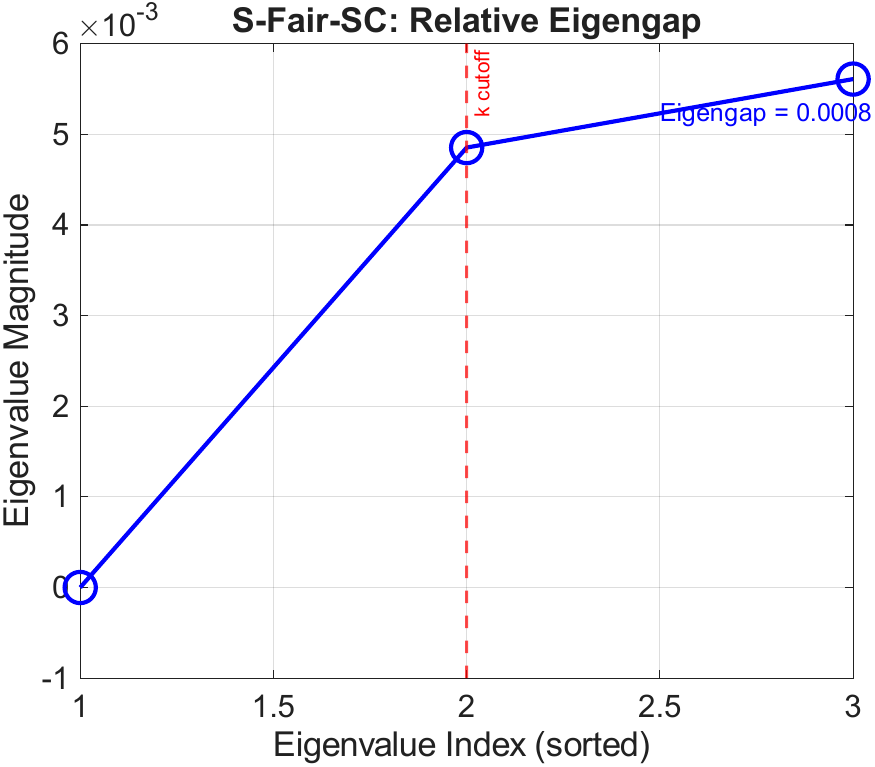}
        \caption{$n=5000$}
    \end{subfigure}
    \hfill
    \begin{subfigure}[b]{0.32\textwidth}
        \includegraphics[width=\linewidth]{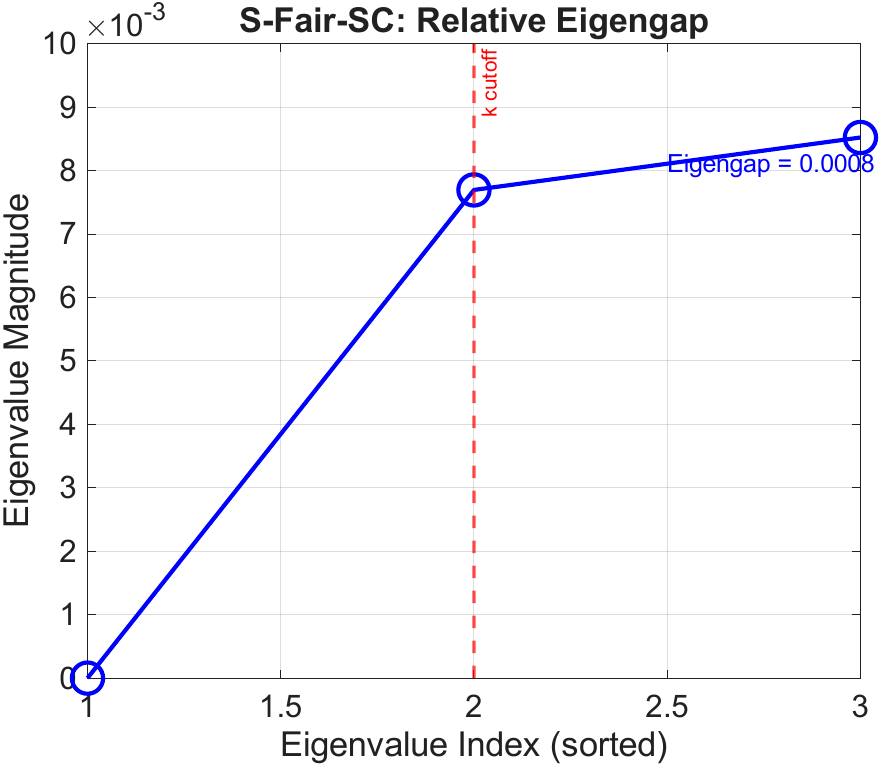}
        \caption{$n=7000$}
    \end{subfigure}

    \caption{Comparison of eigenvalue gaps, runtimes, and scalability for SMW and S-Fair-SC methods across different datasets and graph sizes.}
\end{figure}

\section{Conclusion}
Across all experiments, AFF-Fair-SMW achieved fairness levels comparable to existing baselines while offering substantial runtime advantages. On dense graphs, improvements over S-Fair-SC were modest due to the higher computational cost of large affinity matrices. On sparse graphs, the benefits were dramatic: S-Fair-SC required hundreds of eigensolver restarts (e.g., 605 on Deezer), whereas AFF-Fair-SMW converged in only a few iterations (e.g., 14). This reduction in iterations translated into a massive improvement of algorithm runtime. While spectral properties such as eigen-gap may contribute to this behavior, the most consistent and reproducible finding is that AFF-Fair-SMW substantially reduces eigensolver iterations. These results highlight the robustness of AFF-Fair-SMW, which maintains competitive balance, scales efficiently to larger cluster sizes, and delivers especially strong performance when eigs solver dominates run time.

We propose AFF-Fair-SMW, a fair spectral clustering method that consistently maintains competitive balance while achieving substantially faster eigensolver convergence compared to existing baselines. 
Empirically, it demonstrates robust performance across sparse and dense graphs, dramatically reducing solver iterations and runtime in sparse regimes. These findings suggest that AFF-Fair-SMW is a practical and reliable method for fair clustering on large-scale graphs and provides a foundation for further exploration of spectral efficiency and scalable clustering techniques.

\appendix
\section{Supporting Proofs} 

\begin{theorem}
    Let $A_{\textup{sym}}F = 0$, then the matrix $G_{\textup{sym}}$ satisfies $$U = [I-F(F^TF)^{-1}F^T]G_{\textup{sym}}.$$
    A similar statement holds for $G_{\textup{rw}}$ and $G_{\textup{aff}}$.
\end{theorem}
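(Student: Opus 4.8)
The plan is to observe that the hypothesis $A_{\textup{sym}}F = 0$ collapses the seemingly complicated operator $G_{\textup{sym}}F(F^T G_{\textup{sym}} F)^{-1} F^T G_{\textup{sym}}$ into the ordinary orthogonal projector onto $\mathrm{Range}(F)$, rescaled by the shift. First I would record the three consequences of the hypothesis. Since $G_{\textup{sym}} = A_{\textup{sym}} + 2I$, we get $G_{\textup{sym}}F = A_{\textup{sym}}F + 2F = 2F$; because $G_{\textup{sym}}$ is symmetric, transposing yields $F^T G_{\textup{sym}} = 2F^T$; and therefore $F^T G_{\textup{sym}} F = 2F^T F$, which is invertible since $F$ has full column rank $h-1$, with $(F^T G_{\textup{sym}} F)^{-1} = \tfrac12 (F^T F)^{-1}$.

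Second, I substitute these identities into the definition $U = G_{\textup{sym}} - G_{\textup{sym}}F(F^T G_{\textup{sym}} F)^{-1} F^T G_{\textup{sym}}$, obtaining $U = G_{\textup{sym}} - (2F)\bigl(\tfrac12 (F^TF)^{-1}\bigr)(2F^T) = G_{\textup{sym}} - 2F(F^TF)^{-1}F^T$. Expanding the claimed right-hand side and using $F^T G_{\textup{sym}} = 2F^T$ once more gives $[I - F(F^TF)^{-1}F^T]G_{\textup{sym}} = G_{\textup{sym}} - F(F^TF)^{-1}(2F^T) = G_{\textup{sym}} - 2F(F^TF)^{-1}F^T$. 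The two expressions coincide, establishing the identity for $G_{\textup{sym}}$.

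For $G_{\textup{aff}} = W + nI$ the argument is the same verbatim with the constant $2$ replaced by $n$: the natural hypothesis is $WF = 0$, which gives $G_{\textup{aff}}F = nF$, and since $G_{\textup{aff}}$ is symmetric we also have $F^T G_{\textup{aff}} = nF^T$, after which the computation is identical. The only case requiring extra care is $G_{\textup{rw}} = D^{-1}W + 2I$, which is not symmetric, so the step $F^T G_{\textup{rw}} = 2F^T$ is not automatic from $D^{-1}WF = 0$ alone. Here I would invoke identity \eqref{eq:identity}, $F^T D^{-1} = F^T D^{-1} F (F^T F)^{-1} F^T$: writing $F^T G_{\textup{rw}} = F^T D^{-1} W + 2F^T = F^T D^{-1} F (F^T F)^{-1} (F^T W) + 2F^T$ and noting that $D^{-1}WF = 0$ forces $WF = 0$ and hence $F^T W = 0$, we again get $F^T G_{\textup{rw}} = 2F^T$; the rest of the computation then proceeds exactly as in the symmetric case. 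Thus the main (and rather mild) obstacle is the non-symmetry of $G_{\textup{rw}}$, and it is dispatched precisely by the identity already proved in the discussion preceding this appendix.
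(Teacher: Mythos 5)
Your proof is correct and follows essentially the same route as the paper: both arguments reduce to the observations $G_{\textup{sym}}F = 2F$ and $F^T G_{\textup{sym}} F = 2F^TF$ and then substitute into $U = G_{\textup{sym}} - G_{\textup{sym}}F(F^T G_{\textup{sym}} F)^{-1}F^T G_{\textup{sym}}$ (the paper factors the trailing $G_{\textup{sym}}$ to the right first, while you simplify both sides and compare, which is only a bookkeeping difference). Your explicit treatment of the non-symmetric case $G_{\textup{rw}}$ via the identity $F^T D^{-1} = F^T D^{-1} F (F^T F)^{-1} F^T$ is a worthwhile addition, since the paper dismisses that case with ``a similar statement holds'' even though symmetry genuinely fails there.
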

\begin{proof}
Let  $A_{\text{sym}}F=0$ and $G_{\text{sym}} = A_{\text{sym}}+2I$ then
    \begin{align*}
    U &= (G_{\text{sym}} + \mu F F^T)^{-1}\\
    &   = G_{\text{sym}} - G_{\text{sym}}F(F^TG_{\text{sym}}F)^{-1}F^TG_{\text{sym}} \\
    &   = [I -G_{\text{sym}}F(F^TG_{\text{sym}}F)^{-1}F^T]G_{\text{sym}} \\
    & =  [I -(A_{\text{sym}} + 2I)F(F^T(A_{\text{sym}} + 2I)F)^{-1}F^T]G_{\text{sym}}\\
    &  =  [I -(A_{\text{sym}}F + 2F )(F^T A_{\text{sym}}F + 2F^TF)^{-1}F^T]G_{\text{sym}}\\
    &   =  [I -2F(2F^TF)^{-1}F^T]G_{\text{sym}} \\
    &   = [I -F(F^TF)^{-1}F^T]G_{\text{sym}}\\
    \end{align*}   
\end{proof}

% \begin{theorem}
    
% \end{theorem}
% \begin{proof}
%  Let \(G = (V,E)\) be an undirected graph with degree \(d_i\) for each vertex \(i\).  
% Recall the normalized Laplacian:
% \[
% L_{\text{sym}} = I - D^{-1/2} W D^{-1/2}.
% \]
% Applying \(L_{\text{sym}}\) to a vector \(\bm{y} \in \mathbb{R}^n\) gives:
% \[
% (L_{\text{sym}}\,\bm{y})(i) = y_i \;-\; \sum_{j \sim i} \frac{y_j}{\sqrt{d_i d_j}}.
% \]
% Observe that we can rewrite the first term as:
% \[
% y_i = \frac{1}{\sqrt{d_i}} \sum_{j \sim i} \frac{y_i}{\sqrt{d_i}},
% \]
% since there are exactly \(d_i\) neighbors \(j\sim i\).
% Therefore:
% \[
% (L_{\text{sym}}\,\bm{y})(i)
% =
% \frac{1}{\sqrt{d_i}} \sum_{j \sim i}\frac{y_i}{\sqrt{d_i}}
% \;-\;
% \sum_{j \sim i}\frac{y_j}{\sqrt{d_i d_j}}.
% \]
% Factoring out \(\frac{1}{\sqrt{d_i}}\), we get:
% \[
% (L_{\text{sym}}\,\bm{y})(i)
% =
% \frac{1}{\sqrt{d_i}}
% \sum_{j \sim i}
% \left(
% \frac{y_i}{\sqrt{d_i}} - \frac{y_j}{\sqrt{d_j}}
% \right).
% \]

% \end{proof}
%%%%
\begin{lemma}\label{lem:diff-square}
For any real numbers \(a,b\), 
\[
(a - b)^2 \;\le\; 2\bigl(a^2 + b^2\bigr).
\]
\end{lemma}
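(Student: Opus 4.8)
The plan is to reduce the claimed inequality to the nonnegativity of a square. First I would expand both sides: $(a-b)^2 = a^2 - 2ab + b^2$, while $2(a^2+b^2) = 2a^2 + 2b^2$. Subtracting one from the other, I would compute the difference
\[
2(a^2+b^2) - (a-b)^2 = 2a^2 + 2b^2 - a^2 + 2ab - b^2 = a^2 + 2ab + b^2 = (a+b)^2.
\]
Since $(a+b)^2 \ge 0$ for all real $a,b$, the difference is nonnegative, which is exactly the assertion $(a-b)^2 \le 2(a^2+b^2)$.

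There is essentially no obstacle here; the only thing to be careful about is the direction of the subtraction, so that the leftover term comes out as the perfect square $(a+b)^2$ rather than its negative. An equivalent route, if one prefers to avoid even writing the algebra, is to invoke the elementary bound $2ab \le a^2 + b^2$ (itself a restatement of $(a-b)^2 \ge 0$) and add $a^2 + b^2$ to both sides of $-2ab \le a^2 + b^2$, obtaining $a^2 - 2ab + b^2 \le 2a^2 + 2b^2$. Either presentation is a one-line argument, so I would simply state the chain of equalities culminating in $(a+b)^2 \ge 0$ and conclude.
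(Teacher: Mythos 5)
Your proof is correct and follows essentially the same route as the paper's: both arguments reduce the inequality to the nonnegativity of $(a+b)^2$, differing only in whether one rearranges from $0 \le (a+b)^2$ forward or computes the difference $2(a^2+b^2)-(a-b)^2=(a+b)^2$ directly. No changes needed.
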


\begin{proof}
We begin with the fact that the square of any real number is non-negative:
\[
0 \le (a + b)^2 = a^2 + 2ab + b^2.
\]

Then:
\begin{align*}
0 &\le a^2 + 2ab + b^2, \\
-2ab &\le a^2 + b^2 & \text{(Subtracting \(2ab\) from both sides)}, \\
a^2 - 2ab + b^2 &\le 2a^2 + 2b^2 & \text{(Adding \(a^2 + b^2\) to both sides)}, \\
(a - b)^2 &\le 2(a^2 + b^2) & \text{(Recognizing the left-hand side as a square)}.
\end{align*}

Hence, the claimed inequality holds for all real \(a, b\).
\end{proof}

% \begin{lemma}\label{lem:sum-of-squares}
% For any function \( f : V \to \mathbb{R} \),
% \[
% \sum_{i \sim j} \left( f(i)^2 + f(j)^2 \right)
% = \sum_{j \in V} d_j\, f(j)^2, \text{ where \( d_j \) is the degree of vertex \( j \).}
% \]
% \end{lemma}

\begin{lemma} \label{diffsqr2}
Let \(G = (V,E)\) be an undirected graph with degree \(d_i\) for each vertex \(i\).  
Recall the Laplacian:
\[L = D-W.\]then
\begin{equation}
\bm{y}^TL \bm{y} =\frac{1}{2}\sum_{i,j} W_{ij}(\bm{y}_i - \bm{y}_j)^2.
\end{equation}
\end{lemma}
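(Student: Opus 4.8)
The plan is to prove the identity $\bm{y}^T L \bm{y} = \frac{1}{2}\sum_{i,j} W_{ij}(\bm{y}_i - \bm{y}_j)^2$ by direct expansion of both sides, which is a standard computation. First I would expand the left-hand side using $L = D - W$, writing $\bm{y}^T L \bm{y} = \bm{y}^T D \bm{y} - \bm{y}^T W \bm{y}$. The first term is $\bm{y}^T D \bm{y} = \sum_i d_i \bm{y}_i^2$ since $D$ is diagonal, and recalling $d_i = \sum_j W_{ij}$ this becomes $\sum_i \left( \sum_j W_{ij} \right) \bm{y}_i^2 = \sum_{i,j} W_{ij} \bm{y}_i^2$. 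The second term is $\bm{y}^T W \bm{y} = \sum_{i,j} W_{ij} \bm{y}_i \bm{y}_j$.

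Next I would expand the right-hand side: $(\bm{y}_i - \bm{y}_j)^2 = \bm{y}_i^2 - 2\bm{y}_i\bm{y}_j + \bm{y}_j^2$, so
\[
\frac{1}{2}\sum_{i,j} W_{ij}(\bm{y}_i - \bm{y}_j)^2 = \frac{1}{2}\left( \sum_{i,j} W_{ij}\bm{y}_i^2 - 2\sum_{i,j} W_{ij}\bm{y}_i\bm{y}_j + \sum_{i,j} W_{ij}\bm{y}_j^2 \right).
\]
Here I would use the symmetry of $W$ (the graph is undirected, so $W_{ij} = W_{ji}$) together with relabeling the summation indices $i \leftrightarrow j$ to conclude $\sum_{i,j} W_{ij}\bm{y}_j^2 = \sum_{i,j} W_{ij}\bm{y}_i^2$. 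Substituting this in collapses the right-hand side to $\sum_{i,j} W_{ij}\bm{y}_i^2 - \sum_{i,j} W_{ij}\bm{y}_i\bm{y}_j$, which is exactly $\bm{y}^T D \bm{y} - \bm{y}^T W \bm{y} = \bm{y}^T L \bm{y}$.

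There is no real obstacle here; the only point requiring a moment of care is the index-relabeling step, where the symmetry of $W$ is essential — without it the two cross-diagonal sums would not combine. I would state that assumption explicitly at the start. The proof is short enough that I would simply present the chain of equalities directly rather than splitting it into lemmas.
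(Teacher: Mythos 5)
Your proposal is correct and is essentially the same direct-expansion computation as the paper's proof: both expand $\bm{y}^T L\bm{y} = \bm{y}^T D\bm{y} - \bm{y}^T W\bm{y}$, rewrite the degree term as $\sum_{i,j} W_{ij}\bm{y}_i^2$, and recombine using the symmetry of $W$. The only cosmetic difference is that you expand both sides and meet in the middle (stating the symmetry assumption explicitly), whereas the paper works left to right by splitting the degree sum into two halves.
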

\begin{proof}
    \begin{align*}
        \bm{y}^T L \bm{y} &= \bm{y}^T(D-W)\bm{y} = \bm{y}^T D \bm{y} - \bm{y}^T W \bm{y} \\
&= \sum_j \bm{y}_j^2 d_j - \sum_{i=1}^n \sum_{j=1}^n W_{ij} \bm{y}_i \bm{y}_j \\
&= \frac{1}{2}\sum_i \bm{y}_i^2 d_i - \sum_{i,j} W_{ij} \bm{y}_i \bm{y}_j + \frac{1}{2}\sum_j \bm{y}_j^2 d_j \\
&= \frac{1}{2}\sum_{i,j} W_{ij}(\bm{y}_i^2 - 2\bm{y}_i \bm{y}_j + \bm{y}_j^2) = \frac{1}{2}\sum_{i,j} W_{ij}(\bm{y}_i - \bm{y}_j)^2 
    \end{align*}
\end{proof}
It's a fundamental result in spectral graph theory that the eigenvalues of the normalized Laplacian are bounded within the interval [0,2]. While this property is well-established, with a canonical proof provided by Fan Chung in \cite{chung1997spectral}, we include a derivation here to present it from the perspective of our framework and to ensure consistency in notation.
\begin{theorem}
For a weighted, undirected graph, the quadratic form of the unnormalized Laplacian, normalized by the degree matrix, is bounded by 2.
$$ \sup_{\bm{y} \ne \bm{0}} \frac{\bm{y}^T(D-W)\bm{y}}{\bm{y}^T D \bm{y}} = 2 $$
\end{theorem}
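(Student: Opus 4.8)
The plan is to prove the two inequalities $\sup_{\bm{y}\neq\bm{0}} \frac{\bm{y}^T(D-W)\bm{y}}{\bm{y}^TD\bm{y}} \le 2$ and $\ge 2$ separately, using the quadratic-form identity from Lemma~\ref{diffsqr2} and the elementary inequality of Lemma~\ref{lem:diff-square} as the main tools.

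For the upper bound, I would start from Lemma~\ref{diffsqr2}, which gives $\bm{y}^T(D-W)\bm{y} = \frac{1}{2}\sum_{i,j} W_{ij}(\bm{y}_i-\bm{y}_j)^2$. Applying Lemma~\ref{lem:diff-square} termwise, $(\bm{y}_i-\bm{y}_j)^2 \le 2(\bm{y}_i^2+\bm{y}_j^2)$, so
\[
\bm{y}^T(D-W)\bm{y} \;\le\; \sum_{i,j} W_{ij}(\bm{y}_i^2+\bm{y}_j^2) \;=\; 2\sum_{i,j}W_{ij}\bm{y}_i^2 \;=\; 2\sum_i d_i \bm{y}_i^2 \;=\; 2\,\bm{y}^TD\bm{y},
\]
where I used the symmetry of $W$ to combine the two sums and then the definition $d_i = \sum_j W_{ij}$. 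Dividing by $\bm{y}^TD\bm{y} > 0$ (valid whenever no vertex is isolated, which is the standing assumption) yields the ratio $\le 2$ for every nonzero $\bm{y}$, hence the supremum is at most $2$.

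For the lower bound — showing the value $2$ is actually attained or approached — I would exhibit a vector making the ratio equal to (or arbitrarily close to) $2$. The cleanest case is a graph containing a bipartite connected component with parts $A$ and $B$: take $\bm{y}_i = +1$ for $i\in A$, $\bm{y}_i = -1$ for $i\in B$, and $\bm{y}_i=0$ elsewhere. Then every edge within the component contributes $W_{ij}(\bm{y}_i-\bm{y}_j)^2 = 4W_{ij}$, so $\bm{y}^T(D-W)\bm{y} = 2\sum_{(i,j)}W_{ij}\cdot$ (over that component's edges, with the $\frac12$) $= 2\bm{y}^TD\bm{y}$, giving ratio exactly $2$. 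The honest subtlety is that a general weighted graph need not be bipartite, in which case the supremum is a strict one and not attained; strictly speaking the displayed equation ``$=2$'' should be read as the sharp bound, tight over bipartite instances. I expect the main obstacle to be precisely this gap between the general claim (sup $=2$) and what the termwise argument delivers (sup $\le 2$, with equality requiring bipartiteness) — I would handle it by restricting the tightness claim to the bipartite case (or a bipartite component), which is the standard convention and consistent with Chung~\cite{chung1997spectral}, and noting the upper bound holds unconditionally.
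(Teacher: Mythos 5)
Your upper-bound argument is exactly the paper's proof: the identity of Lemma~\ref{diffsqr2} followed by the termwise application of Lemma~\ref{lem:diff-square} and the symmetry of $W$ to collapse $\sum_{i,j}W_{ij}(\bm{y}_i^2+\bm{y}_j^2)$ into $2\,\bm{y}^TD\bm{y}$. Where you go beyond the paper is on the equality sign: the paper's proof only ever establishes $\le 2$ and never addresses attainment, whereas your bipartite-component construction correctly identifies when the value $2$ is achieved and correctly flags that for a graph with no bipartite component the supremum is strictly below $2$, so the displayed ``$=2$'' should really be read as a sharp upper bound — a genuine (if minor) gap in the stated theorem that your proposal catches.
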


\begin{proof}
\begin{align*}
\frac{\bm{x}^T L_{\text{sym}} \bm{x}}{\bm{x}^T \bm{x}} &= \frac{\bm{x}^T D^{-1/2} L D^{-1/2} \bm{x}}{\bm{x}^T \bm{x}} \\
&= \frac{\bm{y}^T L \bm{y}}{\bm{y}^T D \bm{y}} \\
& = \dfrac{\frac{1}{2}\sum_{i,j} W_{ij}(\bm{y}_i - \bm{y}_j)^2}{\sum_j \bm{y}_j^2 d_j}  \quad \text{(By Lemma \ref{diffsqr2})} \\
& \leq \frac{\sum_{i,j} W_{ij}(\bm{y}_i^2 + \bm{y}_j^2)}{\sum_j \bm{y}_j^2 d_j} \quad \text{(By Lemma \ref{lem:diff-square})} \\
&= \frac{\sum_i \bm{y}_i^2\sum_j W_{ij} + \sum_j \bm{y}_j^2\sum_i W_{ij}}{\sum_j \bm{y}_j^2 d_j}\\ 
&= \frac{\sum_i \bm{y}_i^2 d_i + \sum_j \bm{y}_j^2 d_j}{\sum_j \bm{y}_j^2 d_j}\\
& = \frac{2\sum_j \bm{y}_j^2 d_j}{\sum_j \bm{y}_j^2 d_j} = 2 
\end{align*}
\end{proof}

%% If you have bib database file and want bibtex to generate the
%% bibitems, please use
%%
  \bibliographystyle{elsarticle-num} 

\bibliography{references}

%% else use the following coding to input the bibitems directly in the
%% TeX file.

%% Refer following link for more details about bibliography and citations.
%% https://en.wikibooks.org/wiki/LaTeX/Bibliography_Management

\end{document}